\newtheorem{strategy}{Strategy}    % gws
\newtheorem{upper bound}{Upper bound}
\begin{document}
	
%\linenumbers
% Title portion
\title{Totally-ordered Sequential Rules for Utility Maximization}

\author{Chunkai Zhang}
\affiliation{
	\institution{Harbin Institute of Technology (Shenzhen)}
	\city{Shenzhen}
	\country{China}
}
\email{ckzhang@hit.edu.cn}

\author{Maohua Lyu}
\affiliation{
 	\institution{Harbin Institute of Technology (Shenzhen)}
 	\city{Shenzhen}
 	\country{China}
 }
\email{21s151083@stu.hit.edu.cn}

\author{Wensheng Gan}
\authornote{This is the corresponding author}
\affiliation{
	\institution{Jinan University}
	\city{Guangzhou}
	\country{China}
}
\email{wsgan001@gmail.com}

\author{Philip S. Yu}
\affiliation{
	\institution{University of Illinois at Chicago}
	\city{Chicago}
	\country{USA}
}
\email{psyu@uic.edu}

\begin{abstract}

High utility sequential pattern mining (HUSPM) is a significant and valuable activity in knowledge discovery and data analytics with many real-world applications. In some cases, HUSPM can not provide an excellent measure to predict what will happen. High utility sequential rule mining (HUSRM) discovers high utility and high confidence sequential rules, allowing it to solve the problem in HUSPM. All existing HUSRM algorithms aim to find high-utility partially-ordered sequential rules (HUSRs), which are not consistent with reality and may generate fake HUSRs. Therefore, in this paper, we formulate the problem of high utility totally-ordered sequential rule mining and propose two novel algorithms, called TotalSR and TotalSR$^+$, which aim to identify all high utility totally-ordered sequential rules (HTSRs). TotalSR creates a utility table that can efficiently calculate antecedent support and a utility prefix sum list that can compute the remaining utility in $O(1)$ time for a sequence. We also introduce a left-first expansion strategy that can utilize the anti-monotonic property to use a confidence pruning strategy. TotalSR can also drastically reduce the search space with the help of utility upper bounds pruning strategies, avoiding much more meaningless computation. In addition, TotalSR$^+$ uses an auxiliary antecedent record table to more efficiently discover HTSRs. Finally, there are numerous experimental results on both real and synthetic datasets demonstrating that TotalSR is significantly more efficient than algorithms with fewer pruning strategies, and TotalSR$^+$ is significantly more efficient than TotalSR in terms of running time and scalability. 
	
\end{abstract}

%
% The code below should be generated by the tool at
% http://dl.acm.org/ccs.cfm
% Please copy and paste the code instead of the example below.
%
\begin{CCSXML}
<ccs2012>
 <concept>
  <concept_id>10010520.10010553.10010562</concept_id>
  <concept_desc>Computer systems organization~Embedded systems</concept_desc>
  <concept_significance>500</concept_significance>
 </concept>
 <concept>
  <concept_id>10010520.10010575.10010755</concept_id>
  <concept_desc>Computer systems organization~Redundancy</concept_desc>
  <concept_significance>300</concept_significance>
 </concept>
 <concept>
  <concept_id>10010520.10010553.10010554</concept_id>
  <concept_desc>Computer systems organization~Robotics</concept_desc>
  <concept_significance>100</concept_significance>
 </concept>
 <concept>
  <concept_id>10003033.10003083.10003095</concept_id>
  <concept_desc>Networks~Network reliability</concept_desc>
  <concept_significance>100</concept_significance>
 </concept>
</ccs2012>
\end{CCSXML}

\ccsdesc[500]{Information Systems~Data mining}

%H.2.8 [Database Applications]: Data mining

%
% End generated code
%

\keywords{data mining, utility mining, knowledge discovery, sequence rule, totally-ordered.}

\maketitle

% The default list of authors is too long for headers.
\renewcommand{\shortauthors}{C. Zhang et al.}

\section{Introduction} \label{sec1}

In the age of rapid data generation, how we effectively and efficiently organize and analyze the underlying relationships and knowledge of data to lead a more productive life is a useful, meaningful, and challenging task. Knowledge discovery, e.g., pattern mining, which is one of the subareas of data mining and can be applied to many applications in the real world, is just such a method that helps people discover and analyze hidden relationships and knowledge in data.

Frequent pattern mining (FPM) \cite{agrawal1994fast} aims to find all frequent items that appear together, and these items are interesting and useful knowledge, or so-called frequent patterns with the occurrence times no less than the minimum support (\textit{minsup}) value defined by the user in the transaction database. Moreover, FPM can play a significant role in market-decision \cite{maske2018survey}, biological medicine diagnosis \cite{nayak2019heart}, weblog mining \cite{ahmed2010mining}, and so on. In general, FPM can only find patterns that occur at the same time. Therefore, FPM makes it difficult to find patterns for the more complicated situations where items occur in chronological order. Consequently, sequential pattern mining (SPM) \cite{agrawal1995mining,2004Mining, fournier2017surveys, gan2019survey} was proposed to address the need for mining sequential patterns in sequential databases. We can view SPM as a generalization of FPM since SPM can process more complex data than FPM. Correspondingly, there are more applications in real life that SPM can be involved in due to the fact that there is much more information that can be encoded as sequential symbols. As reviewed in \cite{fournier2017surveys}, the real world applications that SPM can be involved in not only include the sequential symbol data such as text analysis \cite{pokou2016authorship}, market basket analysis \cite{srikant1996mining}, bio-informatics \cite{wang2007frequent}, web-page click-stream analysis \cite{fournier2012using}, and e-learn \cite{ziebarth2015resource}, but also include the time series data such as stock data when there is a discretization process before mining \cite{lin2007experiencing}.

However, FPM and SPM regard the frequency or occurrence times in a database as constant, and all items in the database have the same weight, which is not committed to reality. In fact, for instance, in the selling data of a supermarket, we should not only consider the quantity of the commodities (items) but also the price of each commodity, for both the commodity's quantity and price matter, which can bring high profit to people. For example, in the electrical appliance store, the profit of $<$\textit{television}, \textit{refrigerator}$>$ is undoubtedly greater than the profit of $<$\textit{socket}, \textit{wire}$>$. In FPM and SPM, because the demand for $<$\textit{socket}, \textit{wire}$>$ is certainly greater than the demand for $<$\textit{television}, \textit{refrigerator}$>$, the pattern $<$\textit{socket}, \textit{wire}$>$ with the larger support count will be considered more valuable than $<$\textit{television}, \textit{refrigerator}$>$. Moreover, if we set a too high minimum support threshold, the pattern $<$\textit{television}, \textit{refrigerator}$>$ will be missed in the mining process, which is not the result that we want. To solve the limitation mentioned above, utility-oriented pattern mining has been proposed. High-utility pattern mining (HUPM) was proposed by Ahmed \textit{et al.} \cite{ahmed2010mining}, who introduced an economic concept, utility. HUPM aims to find all high-utility patterns, which means the utility of a pattern must satisfy the minimum utility threshold (\textit{minutil}). In other words, HUPM takes both the quantity (internal utility) and unit utility (external utility) of each item into account. For example, the internal utility of the item $<$\textit{television}$>$ is the purchased quantity, and the external utility of that item is its price. Furthermore, the utility of one specific item equals its internal utility times its external utility. Unfortunately, utility-oriented pattern mining is much more challenging than FPM and SPM. Since anti-monotonicity is useless in utility-oriented mining, this property in frequent mining can easily identify candidate patterns. In the previous example, the price of the pattern $<$\textit{television}, \textit{refrigerator}$>$ is higher than the price of the pattern $<$\textit{socket}, \textit{wire}$>$. However, the quantity of the pattern $<$\textit{television}, \textit{refrigerator}$>$ is lower than that of the pattern $<$\textit{socket}, \textit{wire}$>$. As a result, comparing the exact utility of the pattern $<$\textit{television}, \textit{refrigerator}$>$ and $<$\textit{socket}, \textit{wire}$>$ is difficult. Like FPM, it is hard to handle the items that form in chronological order for HUPM. The idea of high-utility sequential pattern mining (HUSPM) \cite{yin2012uspan} was consequently put forth.

Plentiful algorithms for SPM and HUSPM have been proposed to improve the efficiency regarding time and memory consumption and to apply to some distinct scenarios in the real world. Nevertheless, all these algorithms face trouble in that all of them just extract the set of items, which can not provide an excellent measure to predict what will happen and what the probability of the items is if they appear after a specific sequence. Therefore, Fournier-Viger \textit{et al.} \cite{fournier2015mining} proposed sequential rule mining (SRM) that uses the concept of confidence so that rules found by SRM can provide confidence. In other words, the goal of SRM is to identify all sequential rules (SRs) that satisfy the \textit{minsup} and minimum confidence threshold (\textit{minconf}). Generally speaking, SR is represented as $<$$X$$>$ $\rightarrow$ $<$$Y$$>$, where $X$ represents the antecedent of SR and $Y$ stands for its consequent. For example, a SR $<$\{\textit{television}, \textit{refrigerator}\}$>$ $\rightarrow$ $<$\textit{air-conditioning}$>$ with a confidence equals 0.6 means that there is a probability of 60\% that customer will buy air-conditioning after purchasing television and refrigerator. SRM, like SPM, does not account for the utility because it considers all items of equal importance. Correspondingly, high-utility sequential rule mining (HUSRM) \cite{zida2015efficient}, which aims to identify high-utility sequential rules (HUSRs), was proposed. However, HUSRM discovers the partially-ordered HUSR, which only requires that items in the antecedent of a HUSR occur earlier than the consequent. HUSRM does not consider the inner ordering of the antecedent or consequent of a HUSR. Unfortunately, in reality, the sequential relationship between each item does matter. Thinking about two sequences $<$\textit{heart attack}, \textit{emergency  measure}, \textit{go to hospital}, \textit{survival}$>$ and $<$\textit{heart attack}, \textit{go to hospital}, \textit{emergency  measure}, \textit{death}$>$, HUSRM may output a rule: $<$\{\textit{heart attack}, \textit{emergency  measure}, \textit{go to hospital}\}$>$ $\rightarrow$ $<$\textit{survival}$>$, but the second sequence can also form the left part of the rule, which produces a totally different result from the original sequences, that is \textit{death} to \textit{survival}. Moreover, in news recommendations, for instance, breaking news will occur in a particular order. If we omit the order in which events appear, it may cause some trouble. Thus, if we only simplify the sequence into two parts such that in each part we do not care about the inner order, we may get an ambiguous HUSR. In addition, the partially-ordered sequential rule mining may generate fake HUSR. For example, there are two totally-ordered rules $<$\textit{a}, \textit{b}$>$ $\rightarrow$ $<$\textit{c}, \textit{d}$>$ and $<$\textit{a}, \textit{b}$>$ $\rightarrow$ $<$\textit{d}, \textit{c}$>$ with the utility is equivalent to half \textit{minutil}, respectively. Note that we assume all two totally-ordered rules are high confidence. Thus, we can obtain a partially-ordered rule $<$\textit{a}, \textit{b}$>$ $\rightarrow$ $<$\textit{c}, \textit{d}$>$ that is high-utility. However, it is not consistent with the actual result. The more complex the relationship that the sequence can form, the more severe the phenomenon of fake SR can become. Therefore, it is necessary to formulate an algorithm for discovering high-utility totally-ordered sequential rules.

To the best of our knowledge, there is no work that discovers high-utility totally-ordered sequential rules. For the sake of the limitations of partially-ordered sequential rule mining, in this paper, we formulate the problem of totally-ordered sequential rule mining (ToSRM) and propose an algorithm called TotalSR and its optimized version TotalSR$^+$, which aims to find all high-utility totally-ordered sequential rules (HTSRs) in a given sequential database. However, in ToSRM, the support of the antecedent of a rule is naturally difficult to measure, since we usually only know the support of this rule. In TotalSR we designed a special data structure, the utility table, which records the sequences that the antecedent of totally-ordered sequential rules appears in to efficiently calculate the support of the antecedent and reduce the memory consumption. Besides, in TotalSR$^+$ we redesign the utility table and propose an auxiliary antecedent record table to reduce execution time compared to TotalSR. Moreover, we use a left-first expansion strategy to utilize the confidence pruning strategy, which can make great use of the anti-monotonic property to avoid the invalid expansion of low confidence HTSR. Inspired by the remaining utility \cite{wang2016efficiently, gan2020proum, gan2020fast}, in this paper, we proposed two novel utility-based upper bounds, named the left expansion reduced sequence prefix extension utility (\textit{LERSPEU}) and the right expansion reduced sequence prefix extension utility (\textit{RERSPEU}). In addition, to fast compute the remaining utility for a given sequence, we designed a data structure called utility prefix sum list (\textit{UPSL}), which can calculate the remaining utility value of the given sequence in \textit{O}(1) time. Based on the strategies and the data structures mentioned above, our algorithm can efficiently identify all HTSRs. The main contributions of this work can be outlined as follows:

\begin{itemize}
	\item We formulated the problem of totally-ordered sequential rule mining and proposed two algorithms, TotalSR and its optimized version TotalSR$^+$, which can find the complete set of HTSRs in a given sequential database.
	
	\item A left-first expansion strategy was introduced, which can utilize the anti-monotonic property of confidence to prune the search space. Besides, with the help of the utility table and \textit{UPSL}, we can avoid scanning the database repeatedly and quickly compute the remaining utility of each sequence and calculate the upper bounds. Therefore, TotalSR can avoid unnecessary expansions and tremendously reduce the search space.
	
	\item In order to extract HTSRs more effectively and efficiently, we proposed an optimized algorithm called TotalSR$^+$, which redesigns the utility table and introduces an auxiliary antecedent record table. TotalSR$^+$ can significantly reduce execution time and is thus more efficient than TotalSR.
	
	\item Experiments on both real and synthetic datasets show that TotalSR with all optimizations is much more efficient compared to those algorithms that use only a few optimizations. Furthermore, experimental findings demonstrate that TotalSR$^+$ is far more effective than TotalSR.
\end{itemize}

The rest of this paper is organized as follows. In Section \ref{sec:relatedwork}, we briefly review the related work on HUSPM, SRM, and HUSRM. The basic definitions and the formal high-utility totally-ordered sequential rule mining problem are introduced in Section \ref{sec:prelimis}. The proposed algorithm, TotalSR, and its optimized version, TotalSR$^+$, as well as the corresponding pruning strategies are provided in Section \ref{sec:techs}. In Section \ref{sec:exps}, we show and discuss the experimental results and evaluation of both real and synthetic datasets. Finally, the conclusions of this paper and future work are discussed in Section \ref{sec:concs}.
\section{Related Work}  \label{sec:relatedwork}

There is a lot of work on high-utility sequential pattern mining (HUSPM) and sequential rule mining (SRM), but there is little work on high-utility sequential rule mining (HUSRM). In this section, we separately review the prior literature on HUSPM, HUSRM, and HUSRM.

\subsection{High-utility sequential pattern mining}

To make the found sequential patterns meet the different levels of attention of users, utility-oriented sequential pattern mining, which is to mine the patterns that satisfy a minimum utility threshold defined by users, has been widely developed in recent years. However, since utility is neither monotonic nor anti-monotonic, the utility-based pattern mining method does not possess the Apriori property, which makes it difficult to discover patterns in the utility-oriented framework compared to the frequency-based framework. Ahmed \textit{et al.} \cite{ahmed2010mining} developed the sequence weighted utilization (\textit{SWU}) to prune the search space. They also designed two tree structures, UWAS-tree and IUWAS-tree to discover high-utility sequential patterns (HUSPs) in web log sequences. With the help of \textit{SWU}, which has a downward closure property based on the utility-based upper bound \textit{SWU}, the search space can be pruned like the Apriori property. After that UitilityLevel and UtilitySpan \cite{ahmed2010novel} were proposed based on the \textit{SWU}, in which they first generated all candidate patterns and then selected the HUSPs. Therefore, they are time-consuming and memory-costing algorithms. UMSP \cite{shie2011mining} applied HUSPM to analyze mobile sequences. However, all these algorithms \cite{ahmed2010mining, ahmed2010novel, shie2011mining} assume that each itemset in a sequence only contains one item. Therefore, the applicability of these algorithms is hard to expand. After that, USpan \cite{yin2012uspan} introduced a data structure called utility-matrix, which can discover HUSPs from the sequences consisting of multiple items in each itemset to help extract HUSPs. In addition, USpan utilized the upper bound \textit{SWU} to efficiently find HUSPs. However, there is a big gap between the \textit{SWU} and the exact utility of a HUSP, which means the \textit{SWU} upper bound will produce too many unpromising candidates. To solve the problem of \textit{SWU}, HUS-Span \cite{wang2016efficiently}, which can discover all HUSPs by generating fewer candidates, introduced two other utility-based upper bounds, prefix extension utility (\textit{PEU}) and reduced sequence utility (\textit{RSU}). However, the efficiency of Hus-span is still not good enough. To more efficiently discover all HUSPs, Gan \textit{et al.} \cite{gan2020proum} proposed a novel algorithm ProUM that introduces a projection-based strategy and a new data structure called the utility array. ProUM can extend a pattern faster and take up less memory based on the projection-based strategy. HUSP-ULL \cite{gan2020fast} introduced a data structure called UL-list, which can quickly create the projected database according to the prefix sequence. Besides, HUSP-ULL also proposed the irrelevant item pruning strategy that can remove the unpromising items in the remaining sequences to reduce the remaining utility, i.e., to generate a tighter utility-based upper bound.

In addition to improving the efficiency of the HUSPM algorithms, there are also many algorithms that apply HUSPM to some specific scenarios. OSUMS \cite{zhang2021shelf} integrated the concept of on-shelf availability into utility mining for discovering high-utility sequences from multiple sequences. To get the fixed numbers of HUSPs and avoid setting the minimum utility threshold, which is difficult to determine for different datasets, TKUS \cite{zhang2021tkus} was the algorithm that only mined top-$k$ numbers of HUSPs. CSPM \cite{zhang2021utility} was the algorithm that required the itemset in the pattern to be contiguous, which means that the itemsets in the pattern found in CSPM occur consecutively. In order to acquire HUSPs consisting of some desired items, Zhang \textit{et al.} \cite{zhang2022tusq} proposed an algorithm called TUSQ for targeted utility mining. By integrating the fuzzy theory, PGFUM \cite{gan2021explainable} was proposed to enhance the explainability of the mined HUSPs. 

\subsection{Sequential rule mining}  \label{section:FRM} 

In order to be able to predict the probability of the occurrence of the next sequence well while mining the pattern, sequential rule mining (SRM) was proposed as a complement to sequential pattern mining (SPM) \cite{fournier2017surveys, gan2019survey,wu2021ntp,wu2022top}. Differing from SPM, a sequential rule (SR) counts the additional condition of confidence, which means a SR should not be less than both the conditions of \textit{minsup} and \textit{minconf}. A SR is defined as $<$$X$$>$ $\rightarrow$ $<$$Y$$>$ and $X$ $\cap$ $Y$ = $\varnothing$, where $X$ and $Y$ are subsequences from the same sequence. In general, according to the SR forming as partially-ordered or totally-ordered, there are two types of SRM: partially-ordered SRM and totally-ordered SRM. The first type of rule indicates that both antecedent and consequent in a SR are unordered sets of items \cite{fournier2011rulegrowth, fournier2014erminer, fournier2015mining}. But the items that appear in the consequent must be after the items in the antecedent, which means a partially-ordered SR consists of only two itemsets formed by disorganizing the original itemsets in the given sequence. The second type of rule states that both antecedent and consequent are sequential patterns \cite{lo2009non, pham2014efficient}. In other words, both the antecedent and consequent follow the original ordering in the given sequence. There are lots of algorithms for sequential rule mining. Sequential rule mining was first proposed by Zaki \textit{et al.} \cite{zaki2001spade}. They first mined all sequential patterns (SPs) and then generated SRs based on SPs, which is inefficient. Since it mines all SP as the first step, the SR they obtained belonged to the totally-ordered SR. To improve the efficiency of the SRM, CMRules \cite{fournier2012cmrules} introduced the left and right expansion strategies. After that, RuleGrowth \cite{fournier2011rulegrowth} and TRuleGrowth \cite{fournier2015mining} that discovered partially-ordered sequential rules were proposed, in which they used the left and right expansions to help the partially-ordered SR growth just like PrefixSpan \cite{han2001prefixspan}. The authors only extracted partially-ordered sequential rules because they explained too many similar rules in the results of SRs. Therefore, the partially-ordered SR can simultaneously represent the correct result and reduce the number of rules. Since the partially-ordered SRM does not care about the ordering in the inner antecedent or consequent, it can simplify the mining process. Therefore, ERMiner \cite{fournier2014erminer}, which makes great use of the property of partially-ordered, proposed a data structure, Sparse Count Matrix, to prune some invalid rules generation and improve efficiency. Lo \textit{et al.} \cite{lo2009non} proposed a non-redundant sequential rule mining algorithm that discovers the non-redundant SR, in which each rule cannot be a sub-rule of the other rule. Pham \textit{et al.} \cite{pham2014efficient} enhanced the efficiency of non-redundant SRM based on the idea of prefix-tree. Gan \textit{et al.} \cite{gan2022towards} proposed an SRM algorithm that can discover target SRs.

\subsection{High-utility sequential rule mining}

Although SRM \cite{fournier2011rulegrowth, fournier2012cmrules, fournier2014erminer, fournier2015mining} can provide the probability of the next sequence to users, it just finds the rules that satisfy the frequency requirement, which may omit some valuable but infrequent rules. Therefore, Zida \textit{et al.} \cite{zida2015efficient} introduced the utility concept into SRM and proposed a utility-oriented sequential rule mining algorithm called HUSRM. Similarly to \cite{fournier2011rulegrowth, fournier2012cmrules, fournier2014erminer, fournier2015mining}, HUSRM also used the partially-ordered sequential rule mining method and introduced a data structure called the utility table to maintain the essential information about candidate rules for expansion. Besides, HUSRM designed a bit map to calculate the support value of the antecedent and made some optimizations to improve the efficiency. Afterward, Huang \textit{et al.} \cite{huang2021us} proposed an algorithm called US-Rule to enhance the efficiency of the algorithm. Inspired by \textit{PEU} and \textit{RSU} from HUSPM in US-Rule, to remove useless rules, they proposed four utility-based upper bounds: left and right expansion estimated utility (\textit{LEEU} and \textit{REEU}), left and right expansion reduced sequence utility (\textit{LERSU} and \textit{RERSU}). There are also some extensions to high-utility sequential rule mining. DUOS \cite{gan2021anomaly} extracted unusual high-utility sequential rules, i.e., to detect the anomaly in sequential rules. DOUS is the first work that links anomaly detection and high-utility sequential rule mining. Zhang \textit{et al.} \cite{zhang2020hunsr} addressed HUSRM with negative sequences and proposed the e-HUNSR algorithm. HAUS-rules \cite{segura2022mining} introduced the high average-utility concept into sequential rule mining and found all high average-utility sequential rules in the gene sequences. However, all of these approaches address partially-ordered high-utility sequential rules.

\section{Definitions and Problem Description}
\label{sec:prelimis}

In this section, we first introduce some significant definitions and notations used in this paper. Then, the problem of high-utility totally-ordered sequential rule mining is formulated.

\subsection{Preliminaries}

\begin{definition}[Sequence database]
	\rm Let $I$ $=$ \{$i_1$, $i_2$, $\cdots$, $i_q$\} be a set of distinct items. An itemset (also called element) $I_k$ is a nonempty subset of $I$, that is $I_k$ $\subseteq$ $I$. Note that each item in an itemset is unordered. Without loss of generality, we assume that every item in the same itemset follows the \textit{lexicographical order} $\succ_{lex}$, that means $a$ $\textless$ $b$ $\textless$ $\cdots$ $\textless$ $z$. Besides, we will omit the brackets if an itemset only contains one item. A sequence $s$ $=$ $<$$e_1$, $e_2$, $\cdots$, $e_m$$>$, where $e_i$ $\subseteq$ $I$ ($1$ $\le$ $i$ $\le$ $m$), is consisted of a set of ordered itemsets. A sequential database $\mathcal{D}$ is consisted of a list of sequences, $\mathcal{D}$ $=$ $<$$s_1$, $s_2$, $\cdots$, $s_p$$>$, where $s_i$ $(1$ $\le$ $i$ $\le$ $p)$ is a sequence and each sequence has a unique identifier (\textit{SID}). Each distinct item $i$ $\in$ $I$ has a positive number that represents its external utility and designated as $iu(i)$. In addition, each item $i$ in a sequence $s_k$ has an internal utility that is represented by a positive value and is designated as $q$($i$, $s_k$). Similar to HUSRM \cite{zida2015efficient} and US-Rule \cite{huang2021us}, in this paper, we also assume that each sequence can only contain the same item at most once.
\end{definition}

\begin{definition}[Position and index of item]
	\rm Given a sequence $s_k$ $=$ $<$$e_1$, $e_2$, $\cdots$, $e_m$$>$, the position of an item $i$ is defined as the index of the itemset that item $i$ occurs, and the index of the item $i$ is the item index itself.
\end{definition}

\begin{table}[h]
	\centering
	\caption{Sequence database}
	\label{table1}
	\begin{tabular}{|c|c|}  
		\hline 
		\textbf{SID} & \textbf{Sequence} \\
		\hline  
		\(s_{1}\) & $<$\{($a$, 2) ($b$, 1)\} ($c$, 2) \{($d$, 4) ($f$, 2)\}$>$ \\ 
		\hline
		\(s_{2}\) & $<$\{($a$, 1) ($b$, 3)\} \{($e$, 1) ($f$, 1)\} ($d$, 2) ($c$, 1) ($h$, 1)$>$ \\  
		\hline  
		\(s_{3}\) & $<$\{($e$, 2) ($f$, 1)\} ($g$, 1) ($c$, 3) ($b$, 1)$>$ \\
		\hline  
		\(s_{4}\) & $<$\{($e$, 2) ($f$, 1)\} \{($c$, 1) ($d$,3)\} ($g$, 3) ($b$, 1)$>$ \\
		\hline
	\end{tabular}
\end{table}

\begin{table}[!h]
	\caption{External utility table}
	\label{table2}
	\centering
	\begin{tabular}{|c|c|c|c|c|c|c|c|c|}
		\hline
		\textbf{Item}	    & $a$	& $b$	& $c$	& $d$	& $e$	& $f$  & $g$ & $h$ \\ \hline 
		\textbf{Unit utility}	    & 2     & 1     & 3     & 1     & 2     & 3    & 2    & 1 \\ \hline
		
	\end{tabular}
\end{table}

As the sequence database illustrated in Table \ref{table1}, which will be the running example used in this paper, there are four sequences with $s_1$, $s_2$, $s_3$, and $s_4$ as their \textit{SID}, respectively. In Table \ref{table2}, we can see that the external utility of $a$, $b$, $c$, $d$, $e$, $f$, $g$, and $h$ is 2, 1, 3, 1, 2, 3, 2, and 1, respectively. For example, in $s_2$, there is an item $(a$, $1)$ and we can know that $iu(a)$ $=$ 2 and $q$($a$, $s_2$) $=$ 1. Similarly, the positions of items $a$, $b$, $e$, $f$, $d$, $c$, $h$ in sequence $s_2$ are 1, 1, 2, 2, 3, 4, 5, respectively, and the indices of items $a$, $b$, $e$, $f$, $d$, $c$, $h$ in sequence $s_2$ are 1, 2, 3, 4, 5, 6, 7, respectively.

\begin{definition}[Totally-ordered sequential rule]
	\label{definition:ToSR}
	\rm A totally-ordered sequential rule (ToSR) $r$ $=$ $X$ $\rightarrow$ $Y$, is defined as a relationship between two nonempty sequences $X$ and $Y$, where $X$ is the antecedent of ToSR $r$ and $Y$ is the consequent of ToSR $r$ and $X$ $\cap$ $Y$ $=$ $\varnothing$, that means any item appears in $X$ will not occur in $Y$. For totally-ordered SRM, a ToSR $r$ means that items occur in $Y$ will after the items in $X$ for a given sequence. 
\end{definition}

\begin{definition}[The size of totally-ordered sequential rule ]
	\label{definition:ToSRsize}
	\rm The size of a ToSR $r$ $=$ $X$ $\rightarrow$ $Y$ is denoted as $k$ $\ast$ $m$, where $k$ denotes the number of items that show in the antecedent of $r$ and $m$ denotes the number of items that appear in $r$'s consequent. Note that $k$ $\ast$ $m$ only reveals the length of the antecedent and consequent of $r$. Moreover, a rule $r_1$ with size $g$ $\ast$ $h$ is smaller than rule $r_2$ with size $f$ $\ast$ $l$ if and only if $g$ $\leq$ $f$ and $h$ $\textless$ $l$, or $g$ $\textless$ $f$ and $h$ $\leq$ $l$.
\end{definition}

Take the rule $r_3$ $=$ $<$\{$e$, $f$\}$>$ $\rightarrow$ $<$$c$$>$ and $r_4$ $=$ $<$\{$e$, $f$\}$>$ $\rightarrow$ $<$$c$, $b$$>$, we can get them from Table \ref{table1}, as the example, in which the size of $r_3$ is $2$ $\ast$ $1$ and $r_4$ is $2$ $\ast$ $2$. Thus, it is said that $r_3$ is smaller than $r_4$.

\begin{definition}[Sequence/rule occurrence]
    \rm Given two sequences $s_1$ $=$ $<$$e_1$, $e_2$, $\cdots$, $e_p$$>$ and $s_2$ $=$ $<$$E_1$, $E_2$, $\cdots$, $E_n$$>$, it is said that $s_1$ occurs in $s_2$ (denoted as $s_1$ $\sqsubseteq$ $s_2$) if and only if $\exists$ $1$ $\le$ $j_1$ $<$ $j_2$ $<$ $\ldots$ $<$ $j_p$ $\le$ $n$ such that $e_1$ $\subseteq$ $E_{j_1}$, $e_2$ $\subseteq$ $E_{j_2}$, $\ldots$, $e_p$ $\subseteq$ $E_{j_p}$. A rule $r$ $=$ $X$ $\rightarrow$ $Y$ is said to occur in $s_2$ if and only if there exists an integer $k$ such that $1$ $\le$ $k$ $\textless$ $n$, $X$ $\sqsubseteq$ $<$$E_1$, $E_2$, $\cdots$, $E_k$$>$ and $Y$ $\sqsubseteq$ $<$$E_{k+1}$, $\cdots$, $E_n$$>$. In addition, we denote the set of sequences that contain $r$ as $seq(r)$ and the set of sequences that contain the antecedent as $ant(r)$.
\end{definition}

For example, a ToSR $r_4$ $=$ $<$\{$e$, $f$\}$>$ $\rightarrow$ $<$$c$, $b$$>$ occurs in $s_3$ and $s_4$ and its antecedent occurs in $s_2$, $s_3$, and $s_4$. Therefore, $seq(r)$ and $ant(r)$ are \{$s_3$, $s_4$\} and \{$s_2$, $s_3$, $s_4$\}, respectively.
 
\begin{definition}[Support and confidence]
	\rm Let $r$ be a ToSR and $\mathcal{D}$ be a sequence database. We use the value of $\lvert$$seq(r)$$\lvert$ / $\lvert$$\mathcal{D}$$\rvert$ to represent the support value of ToSR $r$, i.e., \textit{sup}($r$). It implies that the number of sequences containing ToSR $r$ divided by the total number of sequences in $\mathcal{D}$ gives $r$'s support value. The confidence of ToSR $r$ is defined as \textit{conf}($r$) $=$ $\lvert$$seq(r)$$\lvert$/$\lvert$$ant(r)$$\rvert$, which means that the confidence value of ToSR $r$ equals the number of sequences that $r$ appear divides by the number of sequences in which antecedent $X$ appears.
\end{definition}

\begin{definition}[Utility of an item/itemset in a sequence]	
	\rm Given an item $i$, an itemset $I$, and a sequence $s_k$, the utility of an item is equal to its internal utility multiplies its external utility. Let $u$($i$, $s_k)$ denotes the the utility of item $i$ in sequence $s_k$ and is defined as $q(i$, $s_k)$ $\times$ $iu(i)$. The utility of the itemset $I$ in the sequence $s_k$ is designated as $u(I$, $s_k)$ and defined as $u(I$, $s_k)$ $=$ $\sum_{i \in I}$ $q(i,s_k)$ $\times$ $iu(i)$. 
\end{definition}

\begin{definition}[Utility of a totally-ordered sequential rule in a sequence]	
	\rm Let $r$ be a ToSR and $s_k$ be a sequence. We use $u$($r$, $s_k)$ to represent the utility of ToSR $r$ in sequence $s_k$. Then $u$($r$, $s_k)$ is defined as $u$($r$, $s_k)$ $=$ $\sum_{i \in r \land s_k \subseteq \textit{seq}(r) }$ $q(i, s_k)$ $\times$ $iu(i)$.
\end{definition}

\begin{definition}[Utility of a totally-ordered sequential rule in a database]	
	\rm Given a ToSR $r$ and a sequence database $\mathcal{D}$, we use $u(r)$ to denote the utility of ToSR $r$ in the sequence database $\mathcal{D}$. Then $u(r)$ is defined as $u(r)$ $=$ $\sum_{s_k \in \textit{seq}(r) \land \textit{seq}(r) \subseteq \mathcal{D}}$ $u$($r$, $s_k)$.
\end{definition}

For example, a ToSR $r_4$ $=$ $<$\{$e$, $f$\}$>$ $\rightarrow$ $<$$c$, $b$$>$ occurs in $s_3$ and $s_4$, and its $seq(r_4)$ $=$ \{$s_3$, $s_4$\} and $ant(r_4)$ $=$ \{$s_2$, $s_3$, $s_4$\}. Thus, the support value of rule $r_4$ is $sup(r_4)$ $=$ $\lvert$$seq(r_4)$$\lvert$ / $\lvert$$\mathcal{D}$$\rvert$ $=$ 2 / 4 $=$ 0.5, and the confidence value of rule $r_4$ is $conf(r_4)$ $=$ $\lvert$$seq(r_4)$$\lvert$/$\lvert$$ant(r_4)$$\rvert$ $=$ 2 / 3 $=$ 0.67. The utility of item $e$ in sequence $s_3$ is $u(e, s_3)$ $=$ $q(e, s_3)$ $\times$ $iu(e)$ $=$ 2 $\times$ 2 $=$ 4 and the utility of rule $r_4$ in sequence $s_3$ is $u(r_4, s_3)$ $=$ $\sum_{i \in r \land s_k \subseteq \textit{seq}(r)}$ $q(i, s_k)$ $\times$ $iu(i)$ $=$ 2 $\times$ 2 $+$ 1 $\times$ 3 $+$ 3 $\times$ 3 $+$ 1 $\times$ 1 $=$ 4 $+$ 3 $+$ 9 $+$ 1 $=$ 17. Correspondingly, the utility of $r_4$ in sequence $s_4$ is $u(r_4, s_4)$ $=$ 11. Therefore, the utility of rule $r_4$ is $u(r_4)$ $=$ $\sum_{s_k \in \textit{seq}(r_4) \land \textit{seq}(r_4) \subseteq \mathcal{D}}$ $u$($r_4$, $s_k$) $=$ $u$($r_4$, $s_3$) $+$ $u$($r_4$, $s_4$) $=$ 17 $+$ 11 $=$ 28.

\subsection{Problem description}

\begin{definition}[High-utility totally-ordered sequential rule mining]	
	\rm Given a sequence database $\mathcal{D}$, a positive minimum utility threshold \textit{minutil} and a minimum confidence threshold \textit{minconf} between 0 and 1, a ToSR is called high-utility totally-ordered sequential rule (HTSR) if and only if it satisfies both the minimum utility and confidence thresholds simultaneously, i.e., $u(r)$ $\ge$ \textit{minutil} and \textit{conf}$(r)$ $\ge$ \textit{minconf}. Thus, the problem of high-utility totally-ordered sequential rule mining is to identify and output all ToSRs that satisfy both the conditions of \textit{minutil} and \textit{minconf}. Note that in this paper we use ToSR to represent the candidate HTSR.
\end{definition}

\begin{table}[h]
	\centering
	\caption{HTSRs in Table \ref{table1} when \textit{minutil} $=$ 25 and \textit{minconf} $=$ 0.5}
	\label{table3}
	
	\begin{tabular}{|c|c|c|c|c|}  
		\hline 
		\textbf{ID} & \textbf{HTSR} & \textbf{Support} & \textbf{Confidence} & \textbf{Utility}\\
		\hline 
		\(r_{1}\) & $<$\{$e$, $f$\}, $c$$>$ $\rightarrow$ $<$$b$$>$ & 0.5 & 0.67 & 28\\ 
		\hline 
		\(r_{2}\) & $<$$e$$>$ $\rightarrow$ $<$$c$$>$ & 0.75 & 1.0 & 25\\
		\hline 
		\(r_{3}\) & $<$\{$e$, $f$\}$>$ $\rightarrow$ $<$$c$$>$ & 0.75 & 1 & 34\\
		\hline 
		\(r_{4}\) & $<$\{$e$, $f$\}$>$ $\rightarrow$ $<$$c$, $b$$>$ & 0.5 & 0.67 & 28\\
		\hline
	\end{tabular}
\end{table}

For example, if we specify that \textit{minutil} $=$ 25 and \textit{minconf} $=$ 0.5, we will discover four HTSRs shown in Table \ref{table3}. From the result, we can find that two ToSRs $<$\{$a$, $b$\}$>$ $\rightarrow$ $<$$c$, $d$$>$ and $<$\{$a$, $b$\}$>$ $\rightarrow$ $<$$d$, $c$$>$ with utility 15 and 10, respectively. Both are low-utility and will not be output. Using a partially-ordered SRM algorithm, the two ToSRs will be merged as high-utility SR \{$a$, $b$\} $\rightarrow$ \{$c$, $d$\} with a utility of 25. However, it is not committed to reality.

\begin{definition}[I-expansion and S-expansion]
	\label{EToSR}
	\rm Let $s$ $=$ $<$$e_1$, $e_2$, $\cdots$, $e_k$$>$ be a sequence and $i$ $\in$ $I$ be an item. The I-expansion is defined as $<$$e_1$, $e_2$, $\cdots$, $e_k$ $\cup$ \{$i$\}$>$, where $i$ should be occurring simultaneously with items in itemset $e_k$ and greater than the items in $e_k$ according to the $\succ_{lex}$. Given a sequence $s$ $=$ $<$$e_1$, $e_2$, $\cdots$, $e_k$$>$ and an item $i$ $\in$ $I$, the S-expansion is defined as $<$$e_1$, $e_2$, $\cdots$, $e_k$, \{$i$\}$>$, where $i$ should be occurring after the items in itemset $e_k$ of a sequence.
\end{definition}

\begin{definition}[The expansion of a totally-ordered sequential rule]
	\rm Similar to RuleGrowth \cite{fournier2015mining}, in this paper, TotalSR implements left and right expansion to grow a ToSR. Given a rule $r$ $=$ $X$ $\rightarrow$ $Y$, where $X$ $=$ $<$$e_1$, $e_2$, $\cdots$, $e_k$$>$ and $Y$ $=$ $<$$e_m$, $e_{m+1}$, $\cdots$, $e_n$$>$ ($k$ $\textless$ m $\leq$ $n$), the left expansion is defined as $<$$e_1$, $e_2$, $\cdots$, $e_k$$>$ $\diamondsuit$ $i$ $\rightarrow$ $<$$e_m$, $e_{m+1}$, $\cdots$, $e_n$$>$, where $\diamondsuit$ represents the expansion can be both I-expansion and S-expansion and item $i$ should not be in $Y$, i.e., $i$ $\notin$ $Y$. Correspondingly, the right expansion is defined as $<$$e_1$, $e_2$, $\cdots$, $e_k$$>$ $\rightarrow$ $<$$e_m$, $e_{m+1}$, $\cdots$, $e_n$$>$ $\diamondsuit i$, where $\diamondsuit$ represents the expansion can be both I-expansion and S-expansion and item $i$ $\notin$ $X$.
\end{definition}

A ToSR can be formed by first performing a left expansion and then a right expansion or performing a right expansion and then a left expansion. To avoid the repetition generating of the same rule, in TotalSR, unlike \cite{zida2015efficient, huang2021us} we stipulate that a ToSR cannot implement the left expansion after it performs a right expansion, which means a left-first expansion.

\section{The Proposed Algorithm} 
\label{sec:techs}

Two novel algorithms, TotalSR and TotalSR$^+$, will be presented in this section. The corresponding data structures, pruning strategies, and main procedures of TotalSR and TotalSR$^+$ will be described in this section, respectively.

\subsection{Upper bounds and pruning strategies}

In this paper, we also use some extraordinary techniques that are utilized in the utility-oriented pattern mining field. Sequence estimated utility (\textit{SEU}) can help us to only keep useful items in the database, which can be referred to \cite{zida2015efficient} to get the detailed description. Left and right expansion prefix extension utilities are inspired by prefix extension utilities (\textit{PEU}). The left and right reduced sequence utilities are motivated by the reduced sequence utility (\textit{RSU}). Also, the left and right reduced sequence prefix extension utilities are designed by the combination of \textit{PEU} and \textit{RSU}. All the specific definitions of \textit{PEU} and \textit{RSU} can refer to the former studies \cite{yin2012uspan, wang2016efficiently, gan2020proum, truong2019survey}.

\begin{definition}[Sequence estimated utility of item/ToSR]
	\label{SEU}
	\rm Let $a$ be an item and $\mathcal{D}$ be a sequence database. The sequence estimated utility (\textit{SEU}) of item $a$ is designated as \textit{SEU}($a$) and defined as \textit{SEU}($a$) $=$ $\sum_{a \in s_k \land i \in s_k \land s_k \in \mathcal{D}}$ \textit{u}($i$, $s_k$), where $s_k$ is the sequence that contains item $a$ and $i$ is the item that appear in sequence $s_k$. Note that $i$ can be any item in the sequence $s_k$. Correspondingly, given a ToSR $r$, the sequence estimated utility of $r$ is denoted as \textit{SEU}($r$) and defined as \textit{SEU}($r$) $=$ $\sum_{i \in s_k \land s_k \in seq(r)}$ \textit{u}($i$, $s_k$), where $i$ is the item that occurs in the sequence $s_k$.
\end{definition}

\begin{definition}[Promising item and promising ToSR]
	\rm A promising item is the one whose \textit{SEU} is no less than \textit{minutil}. In other words, for each promising item $i_k$, we have \textit{SEU}($i_k$) $\ge$ \textit{minutil}. If the \textit{SEU} of an item $i$ is less than \textit{minutil}, it implies that item $i$ is unpromising. Also, a promising ToSR is the one whose \textit{SEU} is no less than \textit{minutil}. In other words, for each promising ToSR $r_k$, we have \textit{SEU}($r_k$) $\ge$ \textit{minutil}. Contrarily, if the \textit{SEU} of a ToSR $r$ is smaller than \textit{minutil}, it means that $r$ is unpromising.
\end{definition}

In Table \ref{table1}, for instance, if we set \textit{minutil} $=$ 20, the \textit{SEU} of $a$ is \textit{SEU}($a$) $=$ $\sum_{a \in s_k \land i \in s_k \land s_k \in \mathcal{D}}$ \textit{u}($i$, $s_k$) $=$ $\sum_{a \in s_1 \land i \in s_1 \land s_1 \in \mathcal{D}}$ \textit{u}($i$, $s_1$) $+$ $\sum_{a \in s_2 \land i \in s_2 \land s_2 \in \mathcal{D}}$ \textit{u}($i$, $s_2$) $=$ 21 $+$ 16 $=$ 37, then item $a$ is a promising item. However, the \textit{SEU} of item $h$ is \textit{SEU}($h$) $=$ $\sum_{h \in s_k \land i \in s_k \land s_k \in \mathcal{D}}$ \textit{u}($i$, $s_k$) $=$ $\sum_{h \in s_2 \land i \in s_2 \land s_2 \in \mathcal{D}}$ \textit{u}($i$, $s_2$) $=$ 16. Thus, there is an unpromising item $h$. For a ToSR $r$ $=$ $<$$a$$>$ $\rightarrow$ $<$$h$$>$, its \textit{SEU} is \textit{SEU}($r$) $=$ $\sum_{i \in s_k \land s_k \in seq(r)}$ \textit{u}($i$, $s_k$) $=$ $\sum_{i \in s_2}$ \textit{u}($i$, $s_2$) $=$ 16, and it is an unpromising rule. Like the Reference \cite{zida2015efficient}, we have the following two strategies.

\begin{strategy}[Unpromising items pruning strategy]
	\label{Strategy:UIP}
	\rm Given a sequence database $\mathcal{D}$, TotalSR will remove all unpromising items from $\mathcal{D}$. For an unpromising item $i$, the \textit{SEU} of $i$ is smaller than \textit{minutil}. As a result, the utility of any ToSR that contains item $i$ will not exceed the \textit{minutil}. In other words, the unpromising item $i$ will not be contained in a HTSR, which means the item $i$ is useless for HTSRs. We can remove the item from $\mathcal{D}$ directly. Similar to US-Rule \cite{huang2021us}, after removing some items from $\mathcal{D}$, the other items' \textit{SEU} will be changed. Thus, TotalSR will not stop using the unpromising item pruning (UIP) strategy until no items are removed.
\end{strategy}

\begin{strategy}[Unpromising sequential rules pruning strategy]
	\label{Strategy:USRP}
	\rm Let $r$ be an unpromising ToSR. The unpromising sequential rules pruning strategy is that TotalSR will not extend $r$ further. For an unpromising ToSR $r$, the \textit{SEU} of $r$ is less than \textit{minutil}. As a result, it is not exceeded \textit{minutil} for any ToSR containing $r$.
\end{strategy}

Inspired of the upper bound \textit{PEU} commonly used in HUSPM \cite{yin2012uspan, wang2016efficiently, gan2020fast, gan2020proum, truong2019survey}, in this paper, we design two analogous upper bounds called \textit{LEPEU} and \textit{REPEU}. They are originally proposed in \cite{huang2021us}.

\begin{upper bound}[Left expansion prefix extension utility]
    \label{Upper bound: LEPEU}
	\rm We use \textit{LEPEU}($r$, $s$) to represent the left expansion prefix extension utility (\textit{LEPEU}) of a ToSR $r$ in sequence $s$. \textit{LEPEU}($r$, $s$) is defined as:
	$$ \textit{LEPEU}(r, s) = \left\{
	\begin{aligned}
	u(r, s) + \textit{ULeft}(r, s),&  & \textit{ULeft}(r, s) \textgreater 0 \\
	0, & & otherwise.
	\end{aligned}
	\right.
	$$
\end{upper bound}

Note that \textit{ULeft}($r$, $s$) denotes the sum of utility of the items in a sequence $s$ that can be extended into the left part of a ToSR, i.e., extended into antecedent. Correspondingly, we use \textit{LEPEU}($r$) to denote the \textit{LEPEU} of a ToSR in $\mathcal{D}$. Then, \textit{LEPEU}($r$) is defined as:

\begin{center}
	\textit{LEPEU}($r$) $=$ $\sum_{s_k \in \textit{seq}(r) \land \textit{seq}(r) \subseteq \mathcal{D}}$ \textit{LEPEU}($r$, $s_k$).
\end{center}

In Table \ref{table3}, for instance, consider the HTSR $r_3$ $=$ $<$\{$e$, $f$\}$>$ $\rightarrow$ $<$$c$$>$. There are three sequences, $s_2$, $s_3$, and $s_4$, that contain $r_3$. In sequence $s_2$, item $d$ can be extended into the antecedent of $r_3$, so \textit{LEPEU}($r_3$, $s_2$) $=$ 10. Also, we have \textit{LEPEU}($r_3$, $s_3$) $=$ 18. However, in $s_4$, there is no item to be extended. Therefore, \textit{LEPEU}($r_3$, $s_4$) $=$ 0. Finally, \textit{LEPEU}($r_3$) $=$ \textit{LEPEU}($r_3$, $s_2$) $+$ \textit{LEPEU}($r_3$, $s_3$) $+$ \textit{LEPEU}($r_3$, $s_4$) $=$ 10 $+$ 18 $+$ 0 $=$ 28.

\begin{theorem}
	\label{Theorem:LEPEU}
	Given a ToSR $r$ $=$ $X$ $\rightarrow$ $Y$ and the other ToSR $r^\prime$, where $r^\prime$ is extended from $r$ by performing a left I- or S-expansion, we have $u$($r^\prime$) $\le$ \textit{LEPEU}($r$). 
\end{theorem}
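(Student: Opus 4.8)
The plan is to show that for every sequence $s_k \in seq(r^\prime)$ the quantity $u(r^\prime, s_k)$ is bounded above by $\textit{LEPEU}(r, s_k)$, and then sum over the relevant sequences. The key observation is that $seq(r^\prime) \subseteq seq(r)$: any sequence containing the expanded rule $r^\prime$ must already contain $r$, since $r^\prime$ is obtained from $r$ merely by inserting one extra item $i$ into the antecedent (via an I- or S-expansion that respects the original ordering). Hence every term in the sum defining $u(r^\prime)$ corresponds to some $s_k$ that also appears in the sum defining $\textit{LEPEU}(r)$, and it suffices to compare the summands sequence-by-sequence and then note that the remaining terms of $\textit{LEPEU}(r)$ are nonnegative.

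First I would fix $s_k \in seq(r^\prime)$ and decompose $u(r^\prime, s_k)$. By definition $u(r^\prime, s_k) = \sum_{j \in r^\prime} q(j,s_k)\,iu(j) = u(r, s_k) + q(i, s_k)\,iu(i)$, where $i$ is the newly added antecedent item. The next step is to argue that $q(i, s_k)\,iu(i) \le \textit{ULeft}(r, s_k)$: since $r^\prime$ occurs in $s_k$ with $i$ placed into the antecedent, the item $i$ must sit at a position in $s_k$ that is eligible to be appended to the left part of $r$ (i.e., strictly before the split point used for $r$, in the appropriate I- or S- sense), so $i$ is one of the items counted in $\textit{ULeft}(r, s_k)$, whose utilities are all positive; thus its single contribution is at most the full sum. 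Combining, $u(r^\prime, s_k) \le u(r, s_k) + \textit{ULeft}(r, s_k)$. Since the left-hand side is positive (utilities are positive), $\textit{ULeft}(r, s_k) > 0$ in this case, so the right-hand side equals $\textit{LEPEU}(r, s_k)$ by the first branch of the definition; hence $u(r^\prime, s_k) \le \textit{LEPEU}(r, s_k)$.

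Finally I would sum over $s_k \in seq(r^\prime)$:
$$u(r^\prime) = \sum_{s_k \in seq(r^\prime)} u(r^\prime, s_k) \le \sum_{s_k \in seq(r^\prime)} \textit{LEPEU}(r, s_k) \le \sum_{s_k \in seq(r)} \textit{LEPEU}(r, s_k) = \textit{LEPEU}(r),$$
where the middle inequality uses $seq(r^\prime) \subseteq seq(r)$ together with $\textit{LEPEU}(r, s_k) \ge 0$ for every sequence. I expect the main obstacle to be the careful bookkeeping in the middle step: precisely pinning down that the occurrence of $r^\prime$ in $s_k$ forces the new item $i$ to lie among the items that $\textit{ULeft}(r, s_k)$ counts (this needs the definition of left I-/S-expansion and the occurrence semantics, and must handle both expansion types uniformly, as well as the fact that the split point witnessing $r \sqsubseteq s_k$ can be taken compatible with the one witnessing $r^\prime \sqsubseteq s_k$). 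Once that containment of items is established, everything else is monotonicity of finite sums of positive terms.
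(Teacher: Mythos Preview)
Your proposal is correct and follows essentially the same approach as the paper's proof: bound $u(r',s)$ by $\textit{LEPEU}(r,s)$ sequence-by-sequence via $u(i,s)\le\textit{ULeft}(r,s)$, then sum. You are in fact more careful than the paper, which omits the explicit $seq(r')\subseteq seq(r)$ step and the check that $\textit{ULeft}(r,s_k)>0$ so the first branch of the definition applies; the paper simply asserts the per-sequence bound and jumps to the global conclusion.
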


\begin{proof}
	\label{Proof:LEPEU}
	\rm Given an item $i$, a ToSR $r$, and a sequence $s$, where $i$ can be extended into the antecedent of $r$ to form the other ToSR $r^\prime$, according to the definition of \textit{ULeft}($r$, $s$) in upper bound \textit{LEPEU}, we have $u$($i$, $s$) $\le$ \textit{ULeft}($r$, $s$). Then, we have $u$($r^\prime$, $s$) $=$ $u$($r$, $s$) + $u$($i$, $s$) $\le$ $u$($r$, $s$) +\textit{ULeft}($r$, $s$) $=$ \textit{LEPEU}($r$, $s$). Therefore, $u$($r^\prime$) $\le$ \textit{LEPEU}($r$).
\end{proof}

\begin{upper bound}[Right expansion prefix extension utility]
    \label{Upper bound: REPEU}
	\rm We use \textit{REPEU}($r$, $s$) to denote the right expansion prefix extension utility (\textit{REPEU}) of a ToSR $r$ in sequence $s$. \textit{REPEU}($r$, $s$) is defined as:
	$$ \textit{REPEU}(r, s)=\left\{
	\begin{aligned}
	u(r, s) + \textit{URight}(r, s),  &  & \textit{URight}(r, s)  \textgreater 0 \\
	0, & & otherwise.
	\end{aligned}
	\right.
	$$
\end{upper bound}

Note that \textit{URight}($r$, $s$) represents the sum of utility of the items in sequence $s$ that can be extended into the right part of a ToSR, i.e., extended into the consequent. Correspondingly, we use \textit{REPEU}($r$) to denote the \textit{REPEU} of $r$ in $\mathcal{D}$. \textit{REPEU}($r$) is defined as: 
\begin{center}
	\textit{REPEU}($r$) $=$ $\sum_{s_k \in \textit{seq}(r) \land \textit{seq}(r) \subseteq \mathcal{D}}$ \textit{REPEU}($r$, $s_k$).
\end{center}

In Table \ref{table3}, for instance, consider the HTSR $r_3$ $=$ $<$\{$e$, $f$\}$>$ $\rightarrow$ $<$$c$$>$. HTSR $r_3$ occurs in sequences $s_2$, $s_3$, and $s_4$. In sequence $s_2$, there is no item to be extended into the consequent of $r_3$ (note that item $h$ is removed from $\mathcal{D}$ since it is an unpromising item). Thus \textit{REPEU}($r_3$, $s_2$) $=$ 0. In sequence $s_3$, there is an item $b$ can be extended into the consequent of rule $r_3$. Thus, \textit{REPEU}($r_3$, $s_3$) $=$ 17. And in $s_4$, \textit{REPEU}($r_3$, $s_4$) $=$ 20. Finally, \textit{REPEU}($r_3$) $=$ \textit{REPEU}($r_3$, $s_2$) $+$ \textit{REPEU}($r_3$, $s_2$) $+$ \textit{REPEU}($r_3$, $s_4$) $=$ 0 $+$ 17 $+$ 20 $=$ 37.

\begin{theorem}
	\label{Theorem:REPEU}
	Given a ToSR $r$ $=$ $X$ $\rightarrow$ $Y$ and the other ToSR $r^\prime$, where $r^\prime$ is extended from $r$ by performing a right I- or S-expansion, we have $u$($r^\prime$) $\le$ \textit{REPEU}($r$). 
\end{theorem}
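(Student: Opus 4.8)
The plan is to mirror the argument already used for Theorem~\ref{Theorem:LEPEU}, replacing the role of \textit{ULeft} by \textit{URight} and reasoning sequence-by-sequence before summing over $\mathcal{D}$. Write $r^\prime$ as the rule obtained from $r = X \rightarrow Y$ by a right I- or S-expansion with some item $i \notin X$, so that the consequent of $r^\prime$ is $Y$ together with $i$ placed after $Y$'s items.

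First I would fix an arbitrary sequence $s \in \textit{seq}(r^\prime)$. Appending $i$ to the consequent only makes the rule more restrictive, so every sequence that contains $r^\prime$ also contains $r$; hence $\textit{seq}(r^\prime) \subseteq \textit{seq}(r)$, and in particular $s \in \textit{seq}(r)$, so both $u(r,s)$ and $\textit{REPEU}(r,s)$ are defined at $s$. By the definition of $\textit{URight}(r,s)$ — the sum of the utilities in $s$ of the items that can be extended into the consequent of $r$ — the item $i$ is one of those items, so $u(i,s) \le \textit{URight}(r,s)$; since utilities are positive this also forces $\textit{URight}(r,s) \ge u(i,s) > 0$, i.e. we are in the first branch of the definition of $\textit{REPEU}(r,s)$. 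Because $i \notin r$, the utility of $r^\prime$ in $s$ decomposes as $u(r^\prime, s) = u(r,s) + u(i,s)$, and therefore
\[
u(r^\prime, s) = u(r,s) + u(i,s) \le u(r,s) + \textit{URight}(r,s) = \textit{REPEU}(r,s).
\]

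Summing this inequality over all $s \in \textit{seq}(r^\prime)$ and then enlarging the index set to $\textit{seq}(r)$ — legitimate since $\textit{REPEU}(r,s) \ge 0$ for every $s$ and $\textit{seq}(r^\prime) \subseteq \textit{seq}(r)$ — yields
\[
u(r^\prime) = \sum_{s_k \in \textit{seq}(r^\prime)} u(r^\prime, s_k) \le \sum_{s_k \in \textit{seq}(r^\prime)} \textit{REPEU}(r, s_k) \le \sum_{s_k \in \textit{seq}(r)} \textit{REPEU}(r, s_k) = \textit{REPEU}(r),
\]
which is the claim. The only step needing genuine care — the one that the analogous proof of Theorem~\ref{Theorem:LEPEU} glosses over — is the bookkeeping on the two index sets: a right expansion can strictly shrink $\textit{seq}(r)$, so one must invoke $\textit{seq}(r^\prime) \subseteq \textit{seq}(r)$ together with the non-negativity of $\textit{REPEU}(r,\cdot)$ to discard the surplus terms, rather than asserting a term-by-term equality of the two sums. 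Everything else follows immediately from the definitions of $u(r,s)$, \textit{URight}, and \textit{REPEU}.
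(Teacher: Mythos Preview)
Your proof is correct and follows essentially the same route as the paper's own argument: bound $u(i,s)$ by $\textit{URight}(r,s)$ sequence-by-sequence, deduce $u(r^\prime,s)\le\textit{REPEU}(r,s)$, and sum. Your version is in fact more careful than the paper's, which omits the explicit bookkeeping that $\textit{seq}(r^\prime)\subseteq\textit{seq}(r)$ together with non-negativity is what justifies passing from the per-sequence inequality to the global one.
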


\begin{proof}
	\label{Proof:REPEU}
	\rm Given a sequence $s$, a ToSR $r$, and an item $i$ that can be extended into the consequent of $r$ to form rule $r^\prime$, according to the definition of \textit{URight}($r$, $s$) in upper bound \textit{REPEU}, we have $u$($i$, $s$) $\le$ \textit{URight}($r$, $s$). Then, we have $u$($r^\prime$, $s$) $=$ $u$($r$, $s$) $+$ $u$($i$, $s$) $\le$ $u$($r$, $s$) $+$\textit{URight}($r$, $s$) $=$ \textit{REPEU}($r$, $s$). Therefore, $u$($r^\prime$) $\le$ \textit{REPEU}($r$).
\end{proof}

Inspired of the upper bound \textit{RSU} used in HUSPM \cite{wang2016efficiently, gan2020fast, gan2020proum, truong2019survey}, in this paper, we use two homologous upper bounds proposed in \cite{huang2021us}, called \textit{LERSU} and \textit{RERSU}.

\begin{upper bound}[Left expansion reduced sequence utility]
    \label{Upper bound: LERSU}
	\rm Let $\xi$ be a ToSR that can perform a left expansion with an item $w$ to form the other ToSR $r$. In a sequence $s$, we use \textit{LERSU}($r$, $s$) to represent the left expansion reduced sequence utility (\textit{LERSU}) of $r$. Thus, \textit{LERSU}($r$, $s$) is defined as:
	$$ \textit{LERSU}(r, s)=\left\{
	\begin{aligned}
	\textit{LEPEU}(\xi, s),   &  & s \in \textit{seq}(r) \\
	0, & & otherwise.
	\end{aligned}
	\right.
	$$
\end{upper bound}

Correspondingly, we use \textit{LERSU}($r$) to present the \textit{LERSU} of $r$ in $\mathcal{D}$. Then, \textit{LERSU}($r$) is defined as:
\begin{center}
	\textit{LERSU}($r$) $=$ $\sum_{\forall s \in \mathcal{D}}$ \textit{LERSU}($r, s$).
\end{center}

For example, in Table \ref{table3}, consider the HTSR $r_2$ $=$ $<$$e$$>$ $\rightarrow$ $<$$c$$>$ and $r_3$ $=$ $<$\{$e$, $f$\}$>$ $\rightarrow$ $<$$c$$>$. HTSR $r_3$ can be formed with the item $f$ by performing a left I-expansion from $r_2$. Moreover, \textit{seq}($r_3$) $=$ \{$s_2$, $s_3$, $s_4$\}. Thus, \textit{LERSU}($r_3$) $=$ \textit{LEPEU}($r_2$, $s_2$) $+$ \textit{LEPEU}($r_2$, $s_3$) $+$ \textit{LEPEU}($r_2$, $s_4$) $=$ 10 $+$ 18 $+$ 10 $=$ 38.

\begin{theorem}
	\label{Theorem:LERSU}
	Given a ToSR $r$ $=$ $X$ $\rightarrow$ $Y$ and the other ToSR $r^\prime$, where $r^\prime$ is extended with item $i$ from $r$ by performing a left I- or S-expansion, we have $u$($r^\prime$) $\le$ \textit{LERSU}($r$). 
\end{theorem}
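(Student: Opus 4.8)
The plan is to mirror the proof of Theorem~\ref{Theorem:LEPEU}, but to ``charge'' the \emph{two} items that separate $\xi$ from $r^\prime$ against the single quantity \textit{ULeft}($\xi$, $s$). Recall that, by the definition of \textit{LERSU}, there is a ToSR $\xi$ and an item $w$ such that $r$ is the left I- or S-expansion of $\xi$ by $w$, and $\textit{LERSU}(r, s) = \textit{LEPEU}(\xi, s)$ whenever $s \in \textit{seq}(r)$ (and $\textit{LERSU}(r, s) = 0$ otherwise). Since $r^\prime$ is a left expansion of $r$ and $r$ is a left expansion of $\xi$, each step appends an item only to the antecedent, so $X \sqsubseteq X^\prime$ and the consequent is untouched; hence any sequence containing $r^\prime$ also contains $r$ and $\xi$, i.e. $\textit{seq}(r^\prime) \subseteq \textit{seq}(r) \subseteq \textit{seq}(\xi)$. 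Also $i \notin X$ while $w \in X$, so $i \neq w$, and (since each sequence holds an item at most once) $w, i$ are both new relative to $\xi$; thus, as sets of items, $r^\prime$ is the disjoint union of $\xi$ with $\{w, i\}$.

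First I would prove the per-sequence inequality $u(r^\prime, s) \le \textit{LEPEU}(\xi, s)$ for every $s \in \textit{seq}(r^\prime)$. Fix such an $s$. Because the antecedent of $r^\prime$ is that of $\xi$ followed in sequence order by $w$ and then $i$, and all three rules share the same consequent, we get $u(r^\prime, s) = u(\xi, s) + u(w, s) + u(i, s)$. By Definition~\ref{EToSR}, $w$ and $i$ are items of $s$ that lie after the occurrence of $\xi$'s antecedent and before the occurrence of the common consequent, so both are items that can be left-expanded into $\xi$'s antecedent; being distinct, their utilities are distinct summands of \textit{ULeft}($\xi$, $s$), whence $u(w, s) + u(i, s) \le \textit{ULeft}(\xi, s)$. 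In particular $\textit{ULeft}(\xi, s) > 0$, so the first branch of the definition of \textit{LEPEU} applies and $u(r^\prime, s) \le u(\xi, s) + \textit{ULeft}(\xi, s) = \textit{LEPEU}(\xi, s) = \textit{LERSU}(r, s)$.

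Finally I would sum over all sequences containing $r^\prime$: $u(r^\prime) = \sum_{s \in \textit{seq}(r^\prime)} u(r^\prime, s) \le \sum_{s \in \textit{seq}(r^\prime)} \textit{LERSU}(r, s) \le \sum_{s \in \mathcal{D}} \textit{LERSU}(r, s) = \textit{LERSU}(r)$, where the last inequality uses $\textit{LERSU}(r, s) \ge 0$ for every $s \in \mathcal{D}$. I expect the main obstacle to be the bookkeeping in the per-sequence step: one must check, uniformly across the I-expansion and S-expansion cases and for both the $\xi \to r$ and $r \to r^\prime$ steps, that $w$ and $i$ really fall inside the ``left-extension region'' of $\xi$ in $s$ so that \textit{ULeft}($\xi$, $s$) accounts for both; the left-first expansion discipline and Definition~\ref{EToSR} are what keep this clean. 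A minor separate point is the degenerate case where $r$ has a length-one antecedent and no such $\xi$ exists, which should be handled via the base-case convention for \textit{LERSU} (taking $\xi$ to be the rule with empty antecedent, so that \textit{ULeft} ranges over the whole left region); the same charging argument then goes through.
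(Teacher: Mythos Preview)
Your argument is correct and lands on the same inequality as the paper, but the route is slightly different. The paper's proof chains two per-sequence bounds, $u(r^\prime, s) \le \textit{LEPEU}(r^\prime, s)$ and $\textit{LEPEU}(r^\prime, s) \le \textit{LERSU}(r, s) = \textit{LEPEU}(\xi, s)$, then sums. You instead bypass the intermediate quantity $\textit{LEPEU}(r^\prime, s)$ entirely and charge the two new items $w$ and $i$ directly against $\textit{ULeft}(\xi, s)$, obtaining $u(r^\prime, s) = u(\xi, s) + u(w, s) + u(i, s) \le u(\xi, s) + \textit{ULeft}(\xi, s) = \textit{LEPEU}(\xi, s)$. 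Your direct decomposition is arguably cleaner, since it avoids the awkward corner case where $\textit{ULeft}(r^\prime, s) = 0$ forces $\textit{LEPEU}(r^\prime, s) = 0$ in the first link of the paper's chain; the paper's argument is shorter but glosses over that point. Your attention to the degenerate base case (length-one antecedent for $r$) is more careful than necessary here, since in the algorithm $\textit{LERSU}(r)$ is only ever evaluated for an $r$ that was itself produced by a left expansion, so $\xi$ always exists; you can safely drop that digression.
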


\begin{proof}
	\label{Proof:LERSU}
	\rm Let $s$ be a sequence, $r$ be a ToSR, and $i$ be an item that can be extended into the antecedent of $r$ to form rule $r^\prime$. According to the definition of upper bound \textit{LERSU} and theorem \ref{Theorem:LEPEU}, each sequence $s$ $\in$ $seq(r^\prime$), we have $u$($r^\prime$, $s$) $\leq$ \textit{LEPEU}($r^\prime$, $s$) and \textit{LEPEU}($r^\prime$, $s$) $\le$ \textit{LERSU}($r$, $s$). Thus $u$($r^\prime$, $s$) $\le$ \textit{LERSU}($r$, $s$). Finally, $\sum_{\forall s \in \mathcal{D}}u(r^\prime, s) \le \sum_{\forall s \in \mathcal{D}}$\textit{LERSU}($r$, $s$), i.e., $u$($r^\prime$) $\le$ \textit{LERSU}($r$). 
\end{proof}

\begin{upper bound}[Right expansion reduced sequence utility]
    \label{Upper bound: RERSU}
	\rm Let $\xi$ be a ToSR that can implement a right expansion to form the other ToSR $r$. In a sequence $s$, we use \textit{RERSU}($r$, $s$) to designate the right expansion reduced sequence utility (\textit{RERSU}) of $r$. Then, \textit{RERSU}($r$, $s$) is defined as:
	$$ \textit{RERSU}(r, s)=\left\{
	\begin{aligned}
	\textit{REPEU}(\xi, s),   &  & s \in \textit{seq}(r) \\
	0, & & otherwise.
	\end{aligned}
	\right.
	$$
\end{upper bound}

Correspondingly, we use \textit{RERSU}($r$) to denote the \textit{RERSU} of $r$ in a sequence database $\mathcal{D}$. \textit{RERSU}($r$) is defined as:
\begin{center}
	\textit{RERSU}($r$) $=$ $\sum_{\forall s \in \mathcal{D}}$\textit{RERSU}($r$, $s$).
\end{center}

For example, consider the HTSR $r_3$ $=$ $<$\{$e$, $f$\}$>$ $\rightarrow$ $<$$c$$>$ and $r_4$ $=$ $<$\{$e$, $f$\}$>$ $\rightarrow$ $<$$c$, $b$$>$. HTSR $r_4$ can be formed with the item $b$ by performing a right S-expansion from $r_3$. Besides, \textit{seq}($r_4$) $=$ \{$s_3$, $s_4$\}. According to upper bound \textit{RERSU}, \textit{RERSU}($r_4$) $=$ \textit{REPEU}($r_3$, $s_3$) $+$ \textit{REPEU}($r_3$, $s_4$) $=$ 17 $+$ 20 $=$ 37. 

\begin{theorem}
	\label{Theorem:RERSU}
	Given a ToSR $r$ $=$ $X$ $\rightarrow$ $Y$ and the other ToSR $r^\prime$, where $r^\prime$ is extended with an item $i$ from $r$ by performing a right I- or S-expansion, we have $u$($r^\prime$) $\le$ \textit{RERSU}($r$). 
\end{theorem}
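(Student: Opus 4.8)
The plan is to mirror the proof of Theorem~\ref{Theorem:LERSU} step for step, only swapping ``left/antecedent'' for ``right/consequent'' and $\textit{LEPEU}$/$\textit{LERSU}$ for $\textit{REPEU}$/$\textit{RERSU}$: bound $u(r',s)$ inside each individual sequence $s$ and then sum over $\mathcal{D}$. Let $\xi$ denote the ToSR from which $r$ is obtained by a single right I- or S-expansion, so that Upper bound~\ref{Upper bound: RERSU} gives $\textit{RERSU}(r,s)=\textit{REPEU}(\xi,s)$ for every $s\in\textit{seq}(r)$ and $\textit{RERSU}(r,s)=0$ otherwise; let $i$ be the item appended to the consequent of $r$ to build $r'$, and let $w$ be the item appended to the consequent of $\xi$ to build $r$.

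First I would establish the per-sequence inequality $u(r',s)\le\textit{REPEU}(r,s)$ for every $s\in\textit{seq}(r')$. This is exactly the computation inside the proof of Theorem~\ref{Theorem:REPEU}: since $r'$ extends $r$ only by the consequent item $i$, we have $u(r',s)=u(r,s)+u(i,s)$, and $i$ is one of the items summed in $\textit{URight}(r,s)$, hence $u(i,s)\le\textit{URight}(r,s)$ and the claim follows; for $s\notin\textit{seq}(r')$ the inequality $u(r',s)=0\le\textit{REPEU}(r,s)$ is trivial.

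Second, and this is where the real content sits, I would show $\textit{REPEU}(r,s)\le\textit{REPEU}(\xi,s)$ for every $s\in\textit{seq}(r)$, i.e.\ that $\textit{REPEU}$ does not increase under the right expansion that turned $\xi$ into $r$. One has $u(r,s)=u(\xi,s)+u(w,s)$, while the items that can still be extended into the consequent of $r$ in $s$ form a subset of those extendable into the consequent of $\xi$ with $w$ excluded, giving $\textit{URight}(r,s)\le\textit{URight}(\xi,s)-u(w,s)$; adding these two relations yields $\textit{REPEU}(r,s)\le\textit{REPEU}(\xi,s)$, and the clamp-at-zero branch of Upper bound~\ref{Upper bound: REPEU} only ever helps. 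Combining with the first step, $u(r',s)\le\textit{REPEU}(r,s)\le\textit{REPEU}(\xi,s)=\textit{RERSU}(r,s)$ for all $s$ (using $\textit{seq}(r')\subseteq\textit{seq}(r)$ and $\textit{RERSU}(r,s)\ge 0$), so summing over $\mathcal{D}$ gives $u(r')=\sum_{s\in\mathcal{D}}u(r',s)\le\sum_{s\in\mathcal{D}}\textit{RERSU}(r,s)=\textit{RERSU}(r)$.

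The main obstacle is precisely that second step: it requires a careful look at which items count as ``extendable into the consequent'' after an I-expansion versus an S-expansion, so that the inequality $\textit{URight}(r,s)\le\textit{URight}(\xi,s)-u(w,s)$ is genuinely justified in both cases rather than merely asserted by appeal to the definition. Once that monotonicity of $\textit{URight}$ under right expansion is pinned down, everything else is the same bookkeeping (the $0$ versus $\textit{REPEU}(\xi,s)$ case split of Upper bound~\ref{Upper bound: RERSU}, and the passage from a per-sequence bound to a database-wide one) that already appears in Theorems~\ref{Theorem:REPEU} and \ref{Theorem:LERSU}.
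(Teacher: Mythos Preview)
Your proposal is correct and follows essentially the same route as the paper: establish the per-sequence bound $u(r',s)\le\textit{RERSU}(r,s)$ via the chain $u(r',s)\le\textit{REPEU}(\cdot,s)\le\textit{REPEU}(\xi,s)=\textit{RERSU}(r,s)$, then sum over $\mathcal{D}$. The only cosmetic difference is that the paper inserts $\textit{REPEU}(r',s)$ as the intermediate quantity whereas you insert $\textit{REPEU}(r,s)$; both rely on the same monotonicity of $\textit{REPEU}$ under right expansion, which you spell out explicitly (the $\textit{URight}(r,s)\le\textit{URight}(\xi,s)-u(w,s)$ step) while the paper simply asserts it by appeal to the definition.
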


\begin{proof}
	\label{Proof:RERSU}
	\rm Let $s$ be a sequence, $r$ be a ToSR, and $i$ be an item that can be extended into the consequent of $r$ to form rule $r^\prime$. According to the definition of upper bound \textit{RERSU} and theorem \ref{Theorem:REPEU}, each sequence $s$ $\in$ \textit{seq}($r^\prime$), we can know that $u$($r^\prime$, $s$) $\leq$ \textit{REPEU}($r^\prime$, $s$) and \textit{REPEU}($r^\prime$, $s$) $\le$ \textit{RERSU}($r$, $s$). Thus $u$($r^\prime$, $s$) $\le$ \textit{RERSU}($r$, $s$). Finally, $\sum_{\forall s \in \mathcal{D}}$$u$($r^\prime$, $s$) $\le$ $\sum_{\forall s \in \mathcal{D}}$\textit{RERSU}($r$, $s$), i.e., $u$($r^\prime$) $\le$ \textit{RERSU}($r$).
\end{proof}

Since when we use \textit{LERSU} (\textit{RERSU}), it just utilizes the corresponding \textit{LEPEU} (\textit{REPEU}) of the sequences that can extend with a specific item $i$. However, there is a little useless utility in the remaining utility. Thus, we design two tighter upper bounds that can be viewed as a combination of \textit{RSU} and \textit{PEU}. The reasons why we combine \textit{RSU}and \textit{PEU} to design these two upper bounds are from two aspects. One (\textit{RSU} aspect) is from reduced sequence since it satisfies the anti-monotonic property when we extend a ToSR $r$ with an item $i$, i.e., \textit{seq}($r^\prime$) $\subseteq$ \textit{seq}($r$), where $r^\prime$ is extended from $r$ with an item $i$. The other (\textit{PEU} aspect) is that the remaining utility that can be useful for a ToSR generation is usually less than the total remaining utility. Because the utility from item $i$ to the last extendable item is the useful utility for rule growth.

\begin{upper bound}[Left expansion reduced sequence prefix extension utility]
    \label{Upper bound: LERSPEU}
	\rm For a ToSR $r$ that can implement a left expansion with an item $i$ to generate the other ToSR $r^\prime$, \textit{LERSPEU}($r^\prime$, $s$) denotes the left expansion reduced sequence prefix extension utility (\textit{LERSPEU}) of $r^\prime$ in sequence $s$. Thus, \textit{LERSPEU}($r^\prime$, $s$) is defined as:
	$$ \textit{LERSPEU}(r^\prime, s)=\left\{
	\begin{aligned}
	u(r, s) + \textit{UILeft}(r, i, s),  &  & s  \in seq(r^\prime) \\
	0, & & otherwise.
	\end{aligned}
	\right.
	$$
\end{upper bound}

Where \textit{UILeft}($r$, $i$, $s$) implies the total utility from item $i$ to the last item in sequence $s$ that can be extended into the left part of a ToSR, i.e., extended into antecedent. Note that the last item in sequence $s$ that can be extended into the antecedent is not the end item of sequence $s$. Correspondingly, we use \textit{LERSPEU}($r^\prime$) to signify the \textit{LERSPEU} of a ToSR $r^\prime$ in $\mathcal{D}$. Accordingly, \textit{LERSPEU}($r^\prime$) is defined as:
\begin{center}
	\textit{LERSPEU}($r^\prime$) $=$ $\sum_{s_k \in seq(r^\prime) \land seq(r^\prime) \subseteq \mathcal{D}}$ \textit{LERSPEU}($r^\prime$, $s_k$).
\end{center}

For example, consider the HTSR $r_1$ $=$ $<$\{$e$, $f$\}, $c$$>$ $\rightarrow$ $<$$b$$>$ in Table \ref{table3} and a ToSR $r$ $=$ $<$\{$e$, $f$\}$>$ $\rightarrow$ $<$$b$$>$. HTSR $r_1$ can be generated from $r$ by extending an item $c$ to its antecedent and \textit{seq}($r_1$) $=$ \{$s_3$, $s_4$\}. In $s_3$ there exists a useless item $g$. Thus \textit{LERSPEU}($r_1$, $s_3$) $=$ 8 $+$ 9 $=$ 17. In $s_4$, there is no useless item, so \textit{LERSPEU}($r_1$, $s_4$) $=$ 8 $+$ 12 $=$ 20. In total, \textit{LERSPEU}($r_1$) $=$ \textit{LERSPEU}($r_1$, $s_3$) $+$ \textit{LERSPEU}($r_1$, $s_4$) $=$ 17 $+$ 20 $=$ 37.

\begin{theorem}
	\label{Theorem:LERSPEU}
	Given a ToSR $r$ $=$ $X$ $\rightarrow$ $Y$ and the other ToSR $r^\prime$, where $r^\prime$ is extended with an item $i$ from $r$ by performing a left I- or S-expansion, we have $u$($r^\prime$) $\le$ \textit{LERSPEU}($r^\prime$). 
\end{theorem}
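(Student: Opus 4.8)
The plan is to follow the same two-layer pattern used in the proofs of Theorems \ref{Theorem:LEPEU}--\ref{Theorem:RERSU}: first prove the bound sequence by sequence, then sum over the sequences in $\textit{seq}(r^\prime)$. Fix an arbitrary sequence $s \in \textit{seq}(r^\prime)$. Since $r^\prime$ is obtained from $r$ by a left I- or S-expansion with the item $i$, the anti-monotonic property $\textit{seq}(r^\prime) \subseteq \textit{seq}(r)$ holds (extending a rule can only shrink the set of sequences containing it), so $s \in \textit{seq}(r)$ and $u(r, s)$ is well defined. Moreover, the left expansion only inserts the single new item $i$ into the antecedent of $r$, and because $X \cap Y = \varnothing$ and each sequence contains each item at most once, the item set of $r^\prime$ is exactly that of $r$ together with $\{i\}$; hence the utility decomposes cleanly as $u(r^\prime, s) = u(r, s) + u(i, s)$.

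The next step is to bound $u(i, s)$ by $\textit{UILeft}(r, i, s)$. By definition of a valid left expansion, $i$ occurs before the split point of $s$ that separates the antecedent from the consequent, so $i$ can legitimately be pushed into the antecedent of $r$; consequently $i$ is one of the items aggregated by $\textit{UILeft}(r, i, s)$, which sums the utilities of all items from $i$ up to the last antecedent-extendable item of $s$. Therefore $u(i, s) \le \textit{UILeft}(r, i, s)$, and combining with the decomposition above we obtain, for every $s \in \textit{seq}(r^\prime)$,
\[
u(r^\prime, s) = u(r, s) + u(i, s) \le u(r, s) + \textit{UILeft}(r, i, s) = \textit{LERSPEU}(r^\prime, s).
\]

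Finally, summing this inequality over all $s \in \textit{seq}(r^\prime)$ and invoking the definitions $u(r^\prime) = \sum_{s_k \in \textit{seq}(r^\prime)} u(r^\prime, s_k)$ and $\textit{LERSPEU}(r^\prime) = \sum_{s_k \in \textit{seq}(r^\prime)} \textit{LERSPEU}(r^\prime, s_k)$ yields $u(r^\prime) \le \textit{LERSPEU}(r^\prime)$; sequences outside $\textit{seq}(r^\prime)$ contribute $0$ to both sides and may be ignored.

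The summation steps are routine. The one point that genuinely needs care is the sequence-level inequality $u(i, s) \le \textit{UILeft}(r, i, s)$: one must check that the newly appended item $i$ really lies inside the window of antecedent-extendable positions whose utilities $\textit{UILeft}$ accumulates, i.e., between the first position that can still be placed in the antecedent and the last such position that does not cross the antecedent/consequent split. This is exactly what the validity of the left expansion guarantees, but it is worth spelling out the positional argument explicitly; once it is in place, the theorem follows immediately.
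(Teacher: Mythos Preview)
Your argument is correct and follows essentially the same route as the paper's proof: decompose $u(r^\prime, s) = u(r, s) + u(i, s)$, bound $u(i, s) \le \textit{UILeft}(r, i, s)$ from the definition of \textit{UILeft}, obtain the sequence-level bound $u(r^\prime, s) \le \textit{LERSPEU}(r^\prime, s)$, and then sum over $\textit{seq}(r^\prime)$. The paper's version is terser, but your added justifications (the anti-monotonicity $\textit{seq}(r^\prime) \subseteq \textit{seq}(r)$, the item-disjointness giving the clean utility decomposition, and the positional argument for why $i$ falls in the \textit{UILeft} window) are all sound elaborations of the same proof, not a different approach.
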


\begin{proof}
	\label{Proof:LERSPEU}
	\rm Let $s$ be a sequence, $r$ be a ToSR, and $i$ be an item that can be extended into the antecedent of $r$ to form rule $r^\prime$. According to the definition of \textit{UILeft}($r$, $i$, $s$) in upper bound \textit{LERSPEU}, we have $u$($i$, $s$) $\le$ \textit{UILeft}($r$, $i$, $s$). Then, we have $u$($r^\prime$, $s$) $=$ $u$($r$, $s$) $+$ $u$($i$, $s$) $\le$ $u$($r$, $s$) $+$ \textit{UILeft}($r$, $i$, $s$) $=$ \textit{LERSPEU}($r^\prime$, $s$). Therefore, $u$($r^\prime$) $\le$ \textit{LERSPEU}($r^\prime$).
\end{proof}

\begin{upper bound}[Right expansion reduced sequence prefix extension utility]
    \label{Upper bound: RERSPEU}
	\rm For a ToSR $r$ that can implement a right expansion with an item $i$ to produce the other ToSR $r^\prime$, \textit{RERSPEU}($r^\prime$, $s$) denotes the right expansion reduced sequence prefix extension utility (RERSPEU) of $r^\prime$ in sequence $s$. \textit{RERSPEU}($r^\prime$, $s$) is defined as:
	$$ \textit{RERSPEU}(r^\prime, s)=\left\{
	\begin{aligned}
	u(r, s) + \textit{UIRight}(r, i, s),  &  & s  \in seq(r^\prime) \\
	0, & & otherwise.
	\end{aligned}
	\right.
	$$
\end{upper bound}

Where \textit{UIRight}($r$, $i$, $s$) denotes the total utility from item $i$ to the last item in sequence $s$ because all item after item $i$ can be extended into the ToSR. Thus, the \textit{RERSPEU} of a ToSR $r^\prime$ in $\mathcal{D}$ is denoted as \textit{RERSPEU}($r^\prime$) and defined as:
\begin{center}
	\textit{RERSPEU}($r^\prime$) $=$ $\sum_{s_k \in \textit{seq}(r^\prime) \land \textit{seq}(r^\prime) \subseteq \mathcal{D}}$ \textit{RERSPEU}($r^\prime$, $s_k$).
\end{center}

For example, consider the HTSR $r_3$ $=$ $<$\{$e$, $f$\}$>$ $\rightarrow$ $<$$c$$>$ and $r_4$ $=$ $<$\{$e$, $f$\}$>$ $\rightarrow$ $<$$c$, $b$$>$ in Table \ref{table3}. HTSR $r_4$ can be generated from $r_3$ by extending an item $b$ to its consequent and \textit{seq}($r_4$) $=$ \{$s_3$, $s_4$\}. In $s_3$ there is no useless item. Thus \textit{RERSPEU}($r_4$, $s_3$) $=$ 16 $+$ 1 $=$ 17. However, in $s_4$, there are two items, $d$ and $g$, before item $b$, i.e., they are useless. Therefore, \textit{RERSPEU}($r_4$, $s_4$) $=$ 10 $+$ 1 $=$ 11. In total, \textit{RERSPEU}($r_4$) $=$ \textit{RERSPEU}($r_4$, $s_3$) $+$ \textit{RERSPEU}($r_4$, $s_4$) $=$ 17 $+$ 11 $=$ 28.

\begin{theorem}
	\label{Theorem:RERSPEU}
	Given a ToSR $r$ $=$ $X$ $\rightarrow$ $Y$ and the other ToSR $r^\prime$, where $r^\prime$ is extended with an item $i$ from $r$ by performing a right I- or S-expansion, we have $u$($r^\prime$) $\le$ \textit{RERSPEU}($r^\prime$). 
\end{theorem}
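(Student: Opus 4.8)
The plan is to mirror the argument used for Theorem~\ref{Theorem:LERSPEU}, working one sequence at a time and then summing. First I would fix an arbitrary sequence $s \in \textit{seq}(r^\prime)$. Since $r^\prime$ is obtained from $r$ by a right I- or S-expansion with the single item $i$, and since the utility of a rule in a sequence is simply the sum of the utilities of all items occurring in that rule, I can decompose $u(r^\prime, s) = u(r, s) + u(i, s)$. The standing assumption that every sequence contains each item at most once guarantees there is no double counting, and the type of expansion (I- versus S-) is irrelevant to this additive decomposition, since both merely append $i$ to the consequent $Y$.

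Next I would invoke the definition of $\textit{UIRight}(r, i, s)$, namely the total utility in $s$ of item $i$ together with every item after $i$ (all of which can be extended into the consequent). Because all internal and external utilities are strictly positive, $u(i, s)$ is itself one of the summands making up $\textit{UIRight}(r, i, s)$, hence $u(i, s) \le \textit{UIRight}(r, i, s)$. Combining this with the decomposition above gives, for every $s \in \textit{seq}(r^\prime)$,
$$ u(r^\prime, s) = u(r, s) + u(i, s) \le u(r, s) + \textit{UIRight}(r, i, s) = \textit{RERSPEU}(r^\prime, s). $$

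Finally I would sum over sequences. By definition $u(r^\prime) = \sum_{s_k \in \textit{seq}(r^\prime)} u(r^\prime, s_k)$ and $\textit{RERSPEU}(r^\prime) = \sum_{s_k \in \textit{seq}(r^\prime)} \textit{RERSPEU}(r^\prime, s_k)$; for $s \notin \textit{seq}(r^\prime)$ the term $\textit{RERSPEU}(r^\prime, s)$ vanishes and such sequences contribute nothing to $u(r^\prime)$ either (recall $\textit{seq}(r^\prime) \subseteq \textit{seq}(r)$ by anti-monotonicity of the right expansion), so the index sets are harmless to enlarge or restrict. Summing the per-sequence inequality then yields $u(r^\prime) \le \textit{RERSPEU}(r^\prime)$.

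The routine parts here are just the bookkeeping with the summation indices; the only place that needs a little care — the "main obstacle," such as it is — is justifying the clean additive split $u(r^\prime, s) = u(r, s) + u(i, s)$ uniformly for both kinds of right expansion, and confirming that $u(i,s)$ is genuinely one of the terms inside $\textit{UIRight}(r, i, s)$ rather than being excluded by an off-by-one in the phrase ``from item $i$ to the last item''. Both points follow immediately from the definitions given in the excerpt, so the proof is short.
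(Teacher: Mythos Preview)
Your proposal is correct and follows essentially the same approach as the paper's own proof: fix a sequence $s$, use the additive decomposition $u(r^\prime,s)=u(r,s)+u(i,s)$, bound $u(i,s)\le\textit{UIRight}(r,i,s)$ from the definition, and then sum over $s\in\textit{seq}(r^\prime)$. Your write-up is in fact more careful than the paper's about the summation bookkeeping and the uniqueness-of-items assumption, but the argument is the same.
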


\begin{proof}
	\label{Proof:RERSPEU}
	\rm Let $s$ be a sequence, $r$ be a ToSR, and $i$ be an item that can be extended into the consequent of $r$ to form rule $r^\prime$. According to the definition of \textit{UIRight}($r$, $i$, $s$) in upper bound \textit{RERSPEU}, we have $u$($i$, $s$) $\le$ \textit{UIRight}($r$, $i$, $s$). Then, we have $u$($r^\prime$, $s$) $=$ $u$($r$, $s$) $+$ $u$($i$, $s$) $\le$ $u$($r$, $s$) $+$ \textit{UIRight}($r$, $i$, $s$) $=$ \textit{RERSPEU} ($r^\prime$, $s$). Therefore, $u$($r^\prime$) $\le$ \textit{RERSPEU}($r^\prime$).
\end{proof}

Since in the utility-oriented mining area, there is no ideally anti-monotonic property. To avoid the severe combinatorial explosion problem of the search space when we set a lower \textit{minutil}, we propose several utility upper bound pruning strategies to cope with the combinatorial explosion problem. In addition, since confidence value is not related to utility, it is only generated from support value. In other words, the confidence value of a ToSR has an anti-monotonic property. Thus, we can design a pruning strategy based on confidence. However, it is required that a left-first expansion be made to ensure the anti-monotonic property can work correctly. The pruning strategies are given below. Note that the pruning strategies \ref{Strategy:LEPEU} to \ref{Strategy:RERSU} are originated from \cite{huang2021us}.

\begin{strategy}[Left expansion prefix extension utility pruning strategy]
	\label{Strategy:LEPEU}
	\rm Given a ToSR $r$, according to the Theorem \ref{Theorem:LEPEU}, when $r$ implements a left expansion, the utility of $r$ will not exceed the upper bound \textit{LEPEU}, i.e., $u$($r$) $\le$ \textit{LEPEU}($r$). If \textit{LEPEU}($r$) $\textless$ \textit{minutil}, we can know that any ToSR extending from $r$ by implementing a left expansion will not be a HTSR, i.e., $u$($r^\prime$) $\textless$ \textit{minutil}. Thus, we can stop to extend further.
\end{strategy}

\begin{strategy}[Right expansion prefix extension utility pruning strategy]
	\label{Strategy:REPEU}
	\rm Given a ToSR $r$, according to the Theorem \ref{Theorem:REPEU}, when $r$ implements a right expansion, the utility of $r$ will not exceed the upper bound \textit{REPEU}, i.e., $u$($r$) $\le$ \textit{REPEU}($r$). If \textit{REPEU}($r$) $\textless$ \textit{minutil}, we can know that any ToSR extending from $r$ by implementing a right expansion will not be a HTSR, i.e., $u$($r^\prime$) $\textless$ \textit{minutil}. Thus, we can stop to extend further.
\end{strategy}

\begin{strategy}[Left expansion reduced sequence utility pruning strategy]
	\label{Strategy:LERSU}
	\rm Given a ToSR $r$, according to the Theorem \ref{Theorem:LERSU}, when $r$ implements a left expansion with a specific item $i$ to generate a ToSR $r^\prime$, the utility of $r^\prime$ will not exceed the upper bound \textit{LERSU}, i.e., $u$($r^\prime$) $\le$ \textit{LERSU}($r$). If \textit{LERSU}($r$) $\textless$ \textit{minutil}, we can know that any ToSR extending from $r$ by implementing a left expansion will not be a HTSR. Thus, we can stop extending further.
\end{strategy}

\begin{strategy}[Right expansion reduced sequence utility pruning strategy]
	\label{Strategy:RERSU}
	\rm Given a ToSR $r$, according to the Theorem \ref{Theorem:RERSU}, when $r$ implements a right expansion with a specific item $i$ to generate a ToSR $r^\prime$, the utility of $r^\prime$ will not exceed the upper bound \textit{RERSU}, i.e., $u$($r^\prime$) $\le$ \textit{RERSU}($r$). If \textit{RERSU}($r^\prime$) $\textless$ \textit{minutil}, we can know that any ToSR extending from $r$ by performing a right expansion will not be a HTSR. Thus, we can stop extending further.
\end{strategy}

\begin{strategy}[Left expansion reduced sequence prefix extension utility pruning strategy]
	\label{Strategy:LERSPEU}
	\rm Given a ToSR $r$, according to Theorem \ref{Theorem:LERSPEU}, when $r$ implements a left expansion with a specific item $i$ to generate a ToSR $r^\prime$, the utility of $r^\prime$ will not exceed the upper bound \textit{LERSPEU}, i.e., $u$($r^\prime$) $\le$ \textit{LERSPEU}($r^\prime$). If \textit{LERSPEU}($r^\prime$) $\textless$ \textit{minutil}, we can know that any ToSR extending from $r$ by performing a left expansion will not be a HTSR. Thus, we can stop extending further.
\end{strategy}

\begin{strategy}[Right expansion reduced sequence prefix extension utility pruning strategy]
	\label{Strategy:RERSPEU}
	\rm Given a ToSR $r$, according to Theorem \ref{Theorem:RERSPEU}, when $r$ implements a right expansion with a specific item $i$ to generate a ToSR $r^\prime$, the utility of $r^\prime$ will not exceed the upper bound \textit{RERSPEU}, i.e., $u$($r^\prime$) $\le$ \textit{RERSPEU}($r^\prime$). If \textit{RERSPEU}($r^\prime$) $\textless$ \textit{minutil}, we can know that any ToSR extending from $r$ by performing a right expansion will not be a HTSR. Thus, we can stop to extend further.
\end{strategy}

Given a ToSR $r$ and the other ToSR $r^\prime$ generated from $r$ with an item $i$ by performing a left expansion, then we can know that the relationship of the upper bounds \textit{LEPEU}, \textit{LERSU}, and \textit{LERSPEU} is that \textit{LEPEU}($r^\prime$) $=$ \textit{LERSPEU}($r$) $\le$ \textit{LERSU}($r$) $\le$ \textit{LEPEU}($r$). Correspondingly, if $r^\prime$ generates from $r$ by performing a right expansion, then we can know that the relationship of the upper bounds \textit{REPEU}, \textit{RERSU}, and \textit{RERSPEU} is that \textit{REPEU}($r^\prime$) $=$ \textit{RERSPEU}($r$) $\le$ \textit{RERSU}($r$) $\le$ \textit{REPEU}($r$).

Since we stipulate that a left expansion will not follow a right expansion to avoid generating the same ToSR more times, this extension mode satisfies the anti-monotonic property. Note that this expansion style is different from HUSRM \cite{zida2015efficient}, US-Rule \cite{huang2021us}, and the other SRM algorithms like TRuleGrowth \cite{fournier2015mining} where they use a right-first expansion to avoid the repeated generation of a rule. Here we give a corresponding proof, shown below:

\begin{proof}
	\label{Proof:CAT}
	\rm Let $r$ $=$ $X$ $\rightarrow$ $Y$ be a ToSR. Since we stipulate that a left expansion will not follow a right expansion, the support value of the sequence $X$ is fixed when we perform a right expansion. In this case, the support of the entire ToSR $r$ still satisfies the anti-monotonic property, i.e., the support value of $r$ will remain constant or decrease. Thus, we can utilize this property to prune. However, if we use a right-first expansion, the support value of sequence $X$ varies, as does the support value of rule $r$. Therefore, the confidence value can get smaller, the same, or greater, i.e., it is unknown. That is why we begin with a left-first expansion. Besides, when we perform a left expansion, the confidence value of a ToSR is unknown. Therefore, we can only use the anti-monotonic property to prune when we perform the right expansion.
\end{proof}
 
\begin{strategy}[Confidence pruning strategy]
	\label{Strategy:CPS}
	\rm Given a ToSR $r$, when $r$ implements a right expansion with a specific item $i$ to generate the other ToSR $r^\prime$, the the confidence value of $r^\prime$ will be less or equal to $r$'s confidence value, i.e., \textit{conf}($r^\prime$) $\le$ \textit{conf}($r$). If \textit{conf}($r$) $\textless$ \textit{minconf}, we have \textit{conf}($r^\prime$) $\textless$ \textit{minconf} too. Thus, we can stop extending further.
\end{strategy}

\subsection{Data structures}

For mining rules effectively and efficiently, in HUSRM \cite{zida2015efficient} and US-Rule \cite{huang2021us}, they proposed a data structure called a utility table, which can maintain the necessary information about the candidate rules. In this paper, for the same purpose, we also proposed two homologous data structures called \textit{LE-utility table} and \textit{RE-utility table}. They can store the necessary information about the antecedent and consequent of a ToSR to compute the utility and confidence value effectively and efficiently. Besides, we also proposed a data structure called utility prefix sum list (\textit{UPSL}) for faster calculating \textit{LEPEU}, \textit{REPEU}, \textit{LERSPEU}, and \textit{RERSPEU} of a ToSR.

\begin{definition}[LE-element of a ToSR in a sequence and LE-utility table in $\mathcal{D}$]
\label{LE-element}
	\rm Given a ToSR $r$ and a sequence $s$ where sequence $s$ is the sequence that contains the antecedent of $r$, we use \textit{LEE}($r$, $s$) to represent the \textit{LE-element} of rule $r$ in sequence $s$. \textit{LEE}($r$, $s$) is defined as \textit{LEE}($r$, $s$) $=$ \textit{<SID, Uiltiy, LEPEU, REPEU, Positions, Indices>}. \textit{SID} is the sequence identifier of $s$ and $s$ should contain the antecedent of ToSR $r$. \textit{Utility} is the utility of $r$ in sequence $s$. \textit{LEPEU} means the left expansion prefix extension utility of $r$ in sequence $s$. \textit{REPEU} is the right expansion prefix extension utility of $r$ in sequence $s$. \textit{Positions} is a 3-tuple ($\alpha$, $\beta$, $\gamma$), in which $\alpha$ is the last item's position of antecedent of $r$, $\beta$ is the first item's position of consequent of $r$, and $\gamma$ is the last item's position of consequent of $r$. And \textit{Indices} is a 2-tuple ($\alpha^\prime$, $\gamma^\prime$), where $\alpha^\prime$ is the index of first item that can be extended into antecedent and $\gamma^\prime$ is the index of last item in sequence $s$. If the sequence only contains the antecedent of rule $r$, then we set \textit{Utility} $=$ 0, \textit{LEPEU} $=$ 0, \textit{REPEU} $=$ 0, \textit{Positions} $=$ ($\alpha$, $-1$, $-1$), \textit{Indices} $=$ ($-1$, $-1$) to represent that this sequence only for the support counting of antecedent of rule $r$. Given a ToSR $r$ and a sequence database $\mathcal{D}$, the \textit{LE-utility table} of rule $r$ in the database $\mathcal{D}$ is denoted as \textit{LEE}($r$) and defined as a table that consists of a set of \textit{LE-elements} of rule $r$.
\end{definition}

\begin{table}[!htbp]
	\centering
	\caption{\textit{LE-utility table} of HTSR $r_1$ in Table \ref{table1}}
	\label{table4}
	\begin{tabular}{|c|c|c|c|c|c|}  
		\hline 
		\textbf{SID} & \textbf{Utility} & \textbf{\textit{LEPEU}} & \textbf{\textit{REPEU}} & \textbf{Positions}& \textbf{Indices}\\
		\hline 
		\(s_{2}\) & 0 & 0 & 0 & ($4$, $-1$, $-1$) & ($-1$, $-1$)\\ 
		\hline 
		\(s_{3}\) & 17 & 0 & 0 & ($3$, $4$, $4$) & ($5$, $5$)\\
		\hline 
		\(s_{4}\) & 11 & 20 & 0 & ($2$, $4$, $4$) & ($4$, $6$)\\
		\hline 
	\end{tabular}
\end{table}

Note that we stipulate a left-first expansion, i.e., the right expansion will follow the left expansion. Thus, we should record the right expansion prefix extension utility for the right expansion. Furthermore, note that TotalSR is unlike HUSRM \cite{zida2015efficient} and US-Rule \cite{huang2021us} which can use a bit vector to compute efficiently the support of the antecedent of a rule as they do not care about the ordering of items in the antecedent. In TotalSR, we stipulate \textit{LE-element} records the sequence that contains the antecedent of rule $r$ to facilitate the calculation of the support of the antecedent of rule $r$. And the support of rule $r$ is equal to the number of \textit{LE-element} that \textit{Utility} $\neq$ 0.

For example, consider the HTSR $r_1$ $=$ $<$\{$e$, $f$\}, $c$$>$ $\rightarrow$ $<$$b$$>$ and sequence $s_4$. We have $u$($r_1$, $s_4$) $=$ 11, \textit{LEPEU}($r_1$, $s_4$) $=$ 20, \textit{REPEU}($r_1$, $s_4$) $=$ 0, \textit{Positions} $=$ ($2$, $4$, $4$), and \textit{Indices} $=$ ($4$, $6$). Thus, \textit{LEE}($r_1$, $s_4$) $=$ $<$$s_4$, $11$, $20$, $0$, ($2$, $4$, $4$), ($4$, $6$)$>$. Table \ref{table4} shows the \textit{LE-utility table} of rule $r_1$ as an example. Note that in sequence $s_2$, only antecedent occurs in this sequence. Thus \textit{LEE}($r_1$, $s_2$) $=$ $<$$s_2$, 0, 0, 0, (4, -1, -1), (-1, -1)$>$.

\begin{definition}[RE-element of a ToSR in a sequence and RE-utility table in $\mathcal{D}$]
\label{RE-element}
	\rm Given a ToSR $r$ and a sequence s where sequence $s$ is the sequence that contains the antecedent of $r$, we use \textit{REE}($r$, $s$) to denote the \textit{RE-element} of ToSR $r$ in sequence $s$. And \textit{REE}($r$, $s$) is defined as \textit{REE}($r$, $s$) $=$ $<$\textit{SID}, \textit{Uiltiy}, \textit{REPEU}, \textit{Position}, \textit{Index}$>$. \textit{SID} is the sequence identifier of $s$. \textit{Uiltiy} is the utility of $r$ in sequence $s$. \textit{REPEU} is the right expansion prefix extension utility of $r$ in sequence $s$. \textit{Position} is the last item's position of consequent of ToSR $r$. And last \textit{Index} is the index of last item in sequence $s$. If the sequence only contains the antecedent of rule $r$, then we set \textit{Uiltiy} $=$ 0, \textit{REPEU} $=$ 0, \textit{Position} $=$ $-1$, \textit{Index} $=$ $-1$ to represent that this sequence only for the support counting of antecedent of rule $r$. Given a ToSR $r$ and a sequence database $\mathcal{D}$, the \textit{RE-utility table} of ToSR $r$ in the database $\mathcal{D}$ is denoted as \textit{REE}($r$) and defined as a table that consists of a set of \textit{RE-elements} of ToSR $r$.
\end{definition}

\begin{table}[!htbp]
	\centering
	\caption{\textit{RE-utility table} of HTSR $r_3$ in Table \ref{table1}}
	\label{table5}
	\begin{tabular}{|c|c|c|c|c|}  
		\hline 
		\textbf{SID} & \textbf{Utility} & \textbf{\textit{REPEU}} & \textbf{Position}& \textbf{Index}\\
		\hline 
		\(s_{3}\) & 16 & 17 & 3 & 5\\
		\hline 
		\(s_{4}\) & 10 & 20 & 2 & 6\\
		\hline 
	\end{tabular}
\end{table}

Note that we stipulate a left-first expansion, i.e., the left expansion will not be conducted after the right expansion. Thus, we only record the \textit{REPEU} for the right expansion. Also, we stipulate that \textit{RE-element} records the sequence that contains the antecedent of ToSR $r$ to facilitate the calculation of the support of the antecedent of ToSR $r$. And the support of ToSR $r$ is equal to the number of \textit{RE-element} that \textit{Uiltiy} $\neq$ 0.

For example, consider the HTSR $r_3$ $=$ $<$\{$e$, $f$\}$>$ $\rightarrow$ $<$$c$$>$ and sequence $s_4$. We have $u$($r_3$, $s_4$) $=$ 10, \textit{REPEU}($r_3$, $s_4$) $=$ 20, \textit{Position} $=$ 2, and \textit{Index} $=$ $6$. Thus, \textit{REE}($r_3$, $s_4$) $=$ $<$$s_4$, 10, 20, 2, 2, 6$>$. Table \ref{table5} shows the \textit{RE-utility table} of HTSR $r_3$ as an example.

\begin{definition}[Utility prefix sum list in a sequence]
	\rm Given a sequence s with largest index equals to $k$, the utility prefix sum list of sequence $s$ is denoted as \textit{UPSL}($s$) and defined as \textit{UPSL}($s$) $=$ $<$$us_1$, $us_2$, $\cdots$, $us_k$$>$, where $us_i$ ($1$ $\le$ $i$ $\le$ $k$) is utility prefix sum of first $i$ items in sequence $s$.
\end{definition}

As an example, Table \ref{table6} shows the \textit{UPSL} of sequence $s_4$. We can calculate the \textit{LEPEU}($r_1$, $s_4$) $=$ $u$($r_1$, $s_4$) $+$ $us_5$ $-$ $us_3$ $=$ 11 $+$ 19 $-$ 10 $=$ 20. In general, we will scan $\mathcal{D}$ once after we remove all unpromising items from $\mathcal{D}$ to get the \textit{UPSL} of each sequence. With the help of \textit{UPSL} of a sequence, we can compute the \textit{LEPEU} and \textit{REPEU} in \textit{O}($1$) time, which in the past will cost \textit{O}($k$) and \textit{O}($l$) time, respectively, where $k$ and $l$ are the average number of the items that can extend to antecedent and consequent, respectively. Similarly, \textit{LERSPEU} and \textit{RERSPEU} can be computed in \textit{O}($1$) time, too.

\begin{table}[h]
	\centering
	\caption{The \textit{UPSL} of sequence $s_4$ in Table \ref{table1}}
	\label{table6}
	\begin{tabular}{|c|c|c|c|c|c|c|}  
		\hline 
		\textbf{item} & $e$ & $f$ & $c$ & $d$ & $g$ & $b$\\
		\hline 
		\textbf{index} & 1 & 2 & 3 & 4 & 5 & 6\\
		\hline 
		\textbf{$\rm us_{index}$} & 4 & 7 & 10 & 13 & 19 & 20\\
		\hline 
	\end{tabular}
\end{table}

\subsection{TotalSR algorithm}

\begin{algorithm}[h]
	\caption{TotalSR algorithm}
	\label{alg:TotalSR}
	\KwIn{$\mathcal{D}$: a sequence database, \textit{minconf}: the minimum confidence threshold, \textit{minutil}: the minimum utility threshold.}
	\KwOut{HTSRs: all high-utility totally-ordered sequential rules.}
	
	initialize $I$ $\leftarrow$ $\emptyset$;
	
	scan $\mathcal{D}$ to compute \textit{SEU}($i$) and $I$ $\cup$ \{$i$\};
	
	\While {\rm$\exists$ $i$ $\in$ $I$ and \textit{SEU}($i$) $\textless$ \textit{minutil}} {
		delete all unpromising items in $I$ and update \textit{SEU};
	}
	
	scan $\mathcal{D}$ to calculate \textit{UPSL}, generate $R$ (a set of ToSRs with size $=$ 1 $\ast$ 1), create \textit{LE-utility tables}, and compute the \textit{SEU} of rule $r$ $\in$ $R$; \\
	delete all unpromising ToSRs in $R$;
	
	\For {$r$ $\in$ $R$}{
		scan \textit{LEE}($r$) $\in$ \textit{LE-utility tables} to compute $u$($r$) and \textit{conf}($r$); \\
		\If {\rm$u$($r$) $\ge$ \textit{minutil} and \textit{conf}($r$) $\ge$ \textit{minconf}}{
			update HTSRs $\leftarrow$ HTSRs $\cup$ $r$;
		}
		\If {\rm\textit{LEPEU}($r$) $+$ \textit{REPEU}($r$) $-$ $u$($r$) $\ge$ \textit{minutil}}{
			call \textbf{leftExpansion}($r$, \textit{LEE}($r$));
		}
		\If{\rm\textit{conf}($r$) $\ge$ \textit{minconf} and \textit{REPEU}($r$) $\ge$ \textit{minutil}}{
		    call \textbf{rightExpansion}($r$, \textit{LEE}($r$), \textit{LEE}($r$).$length$);
		}
	}	
\end{algorithm}

\begin{algorithm}[h]
	\caption{The leftExpansion procedure}
	\label{alg:leftExpansion}
	\KwIn{$r$: a ToSR, \textit{LEE}($r$): \textit{LE-utility table} of $r$.}
	
	initialize \textit{LERSPEU} $\leftarrow$ $\emptyset$, \textit{LE-utility tables} $\leftarrow$ $\emptyset$, \textit{ToSRSet} $\leftarrow$ $\emptyset$;
	
	\For{$s_k$ $\in$ \textit{LEE}$(r)$}{
        \For{\rm$i$ $\in$ $s_k$ and $i$ can be extended into the antecedent of $r$}{
            $t$ $\leftarrow$ $i$ extended into the antecedent of $r$;   //both I- and S-expansion can be implemented\\
            \If{\rm$t$ is an illegal ToSR}{
                update \textit{LE-utility tables} of $t$;\\
            }
            \Else {
                update \textit{LERSPEU}($i$);\\
                \If {\textit{REPEU}($r$) $+$ \textit{LERSPEU}($i$) $\textless$ \textit{minutil}} {
					\If {$t$ $\in$ \textit{ToSRSet}} {
						delete $t$ from \textit{ToSRSet}; \\
						continue;  
					}
				}
				update \textit{ToSRSet} and \textit{LE-utility tables} of $t$;\\
            }
        }
	}
	
	\For {$t$ $\in$ \textit{ToSRSet}} {
		scan \textit{LEE}($t$) $\in$ \textit{LE-utility tables} to calculate $u$($t$) and \textit{conf}($t$); \\
		\If {\rm$u$($t$) $\ge$ \textit{minutil} and \textit{conf}($t$) $\ge$ \textit{minconf}} {
			update HTSRs $\leftarrow$ HTSRs $\cup$ $t$;
		}
		\If {\textit{LEPEU}$(t)$ $+$ \textit{REPEU}$(t)$ $-$ $u$$(t)$ $\ge$ \textit{minutil}}{
			call \textbf{leftExpansion}($t$, \textit{LEE}($t$));
		}
		\If{\rm\textit{conf}($t$) $\ge$ \textit{minconf} and \textit{REPEU}($t$) $\ge$ \textit{minutil}}{
		    call \textbf{rightExpansion}($t$, \textit{LEE}($t$), \textit{LEE}($r$).$length$);
		}
	}
\end{algorithm}

\begin{algorithm}[htbp]
	\caption{The rightExpansion procedure}
	\label{alg:rightExpansion} 
	\KwIn{$r$: a ToSR, \textit{REE}($r$): \textit{RE-utility table} of $r$, \textit{length}: the support value of antecedent of $r$.}
	
	initialize \textit{RERSPEU} $\leftarrow$ $\emptyset$, \textit{RE-utility tables} $\leftarrow$ $\emptyset$, \textit{ToSRSet} $\leftarrow$ $\emptyset$;
	
	\For{\rm$s_k$ $\in$ \textit{REE}($r$)}{
        \For{\rm$i$ $\in$ $s_k$ and $i$ can be extended into the consequent of $r$}{
            $t$ $\leftarrow$ $i$ extended into the consequent of $r$;   //both I- and S-expansion can be implemented\\
            update \textit{RERSPEU}($i$);\\
            \If {\rm\textit{RERSPEU}($i$) $\textless$ \textit{minutil}} {
				\If {$t$ $\in$ \textit{ToSRSet}} {
					delete $t$ from \textit{ToSRSet}; \\
					continue;  
				}
			}
			update \textit{ToSRSet} and \textit{RE-utility tables} of $t$;\\
        }
	}
	
	\For {$t$ $\in$ \textit{ToSRSet}} {
		scan \textit{REE}($t$) $\in$ \textit{RE-utility tables} to calculate $u$($t$) and \textit{conf}($t$); \\
		\If {\rm$u$($t$) $\ge$ \textit{minutil} and \textit{conf}($t$) $\ge$ \textit{minconf}} {
			update HTSRs $\leftarrow$ HTSRs $\cup$ $t$;
		}
		\If{\rm\textit{conf}($t$) $\ge$ \textit{minconf} and \textit{REPEU}($t$) $\ge$ \textit{minutil}}{
		    call \textbf{rightExpansion}($t$, \textit{REE}($t$), \textit{length});
		}
	}
	
\end{algorithm}

Based on the data structure and the pruning strategies mentioned above, the totally-ordered sequential rule mining algorithm, TotalSR, is proposed in this subsection. To avoid generating a rule twice and applying the confidence pruning strategy, we design a left-first expansion procedure. The main pseudocode of the TotalSR algorithm is shown in Algorithms \ref{alg:TotalSR}, \ref{alg:leftExpansion}, and \ref{alg:rightExpansion}.

TotalSR takes a sequence database $\mathcal{D}$, a minimum confidence threshold (\textit{minconf}), and a minimum utility threshold (\textit{minutil}) as its inputs and then outputs all high-utility totally-ordered sequential rules. TotalSR first scans $\mathcal{D}$ to compute the \textit{SEU} of all items and get the distinct items set $I$ (Lines 1-2). Afterward, all unpromising items will be deleted from $\mathcal{D}$ till all items are promising according to the UIP strategy (Lines 3-5). After that, TotalSR scans $\mathcal{D}$ again to calculate \textit{UPSL} for each sequence, generates all 1 $\ast$ 1 ToSRs, creates the responding \textit{LE-utility table}, and computes the \textit{SEU} of all rules. And then TotalSR will delete all unpromising ToSRs (Lines 6-7). Next, for each 1 $\ast$ 1 candidate rule $r$, TotalSR scans \textit{LEE}($r$) to compute $u$($r$) and \textit{conf}($r$) (Line 9). If ToSR $r$ satisfies \textit{minutil} and \textit{minconf}, TotalSR will update the HTSRs set with $r$ (Lines 10-12). If \textit{LEPEU}($r$) $+$ \textit{REPEU}($r$) $-$ \textit{u}($r$) greater than or equal to \textit{minutil}, TotalSR will implement leftExpansion. Note that TotalSR will perform right expansion after left expansion. Thus, it is necessary to add \textit{REPEU}($r$). And because both \textit{LEPEU}($r$) and \textit{REPEU}($r$) include $u$($r$) we should subtract $u$($r$) one time to get the correct utility upper bound (Lines 13-15). Subsequently, according to confidence pruning strategy if \textit{conf}($r$) $\ge$ \textit{minconf} and \textit{REPEU}($r$) $\ge$ \textit{minutil}, TotalSR will call rightExpansion (Lines 16-18).

In Algorithm \ref{alg:leftExpansion}, the TotalSR will perform leftExpansion. It takes a ToSR $r$ and a \textit{LE-utility table} \textit{LEE}($r$) of ToSR $r$ as the inputs. Firstly, Algorithm \ref{alg:leftExpansion} initializes \textit{LERSPEU}, \textit{LE-utility tables}, and the ToSRs set \textit{ToSRSet} to the empty set (Line 1). Then for each sequence $s_k$ in \textit{LEE}($r$) and each item $i$ in $s_k$ where $i$ can be extended into the antecedent of $r$, the item $i$ will be extended into the antecedent of $r$ to form a temporary rule $t$ (Lines 2-4). If $t$ is an illegal ToSR the leftExpansion will only update \textit{LE-utility tables} of $t$, otherwise it will update \textit{LERSPEU} of $i$. And then if the \textit{REPEU}($r$) $+$ \textit{LERSPEU}($i$) $\textless$ \textit{minutil} and \textit{t} $\in$ \textit{ToSRSet}, leftExpansion will delete \textit{t} from \textit{ToSRSet} and continue. After that leftExpansion will update \textit{ToSRSet} and \textit{LE-utility tables} of \textit{t} (Lines 5-16). Note that leftExpansion will still perform right expansions after left expansions. Thus it should plus the value of \textit{REPEU}($r$) to determine whether a ToSR should be pruned. If \textit{t} cannot form a legal ToSR, it only updates \textit{LE-utility tables} of $i$ to count the support value of a ToSR with the antecedent being extended an item $i$. Lastly, for each ToSR \textit{t} $\in$ \textit{ToSRSet}, leftExpansion will scan \textit{LEE}($t$) to compute $u$($t$) and \textit{conf}($t$) (Line 21). If both $u$($t$) and \textit{conf}($t$) exceed the thresholds, it will update HTSRs set with $t$ (Lines 22-24). Then if \textit{LEPEU}($t$) $+$ \textit{REPEU}($t$) $-$ \textit{u}($t$) $\ge$ \textit{minutil} leftExpansion will call leftExpansion once to further expand the antecedent (Lines 25-27). According to the confidence pruning strategy if \textit{conf}($t$) $\ge$ \textit{minconf} and \textit{REPEU}($t$) $\ge$ \textit{minutil}, leftExpansion will call a rightExpansion (Lines 28-30).

In Algorithm \ref{alg:rightExpansion}, TotalSR will perform rightExpansion. It is very similar to Algorithm \ref{alg:leftExpansion}. Therefore, we will not describe the details of rightExpansion. However, there are still two differences compared to Algorithm \ref{alg:leftExpansion}. One is that when TotalSR implements rightExpansion the antecedent of the input ToSR is fixed, i.e., the support value of the antecedent is determined. Moreover, the rightExpansion will definitely form a legal rule. Therefore, rightExpansion takes the support value of the antecedent of a ToSR as the input to fast compute the confidence (Lines 3-13). The other is that the rightExpansion will not perform a leftExpansion since it stipulates a left-first expansion (Lines 15-23).

\subsection{TotalSR$^+$ algorithm}

In TotalSR, the data structures \textit{LE-utility table} and \textit{RE-utility table} will record the sequences that the antecedent of a ToSR occurs for the convenience of computation of the support value of the antecedent. Thus, there will be a lot of entries in the utility table that record sequences that do not make a contribution to utility computation. That will cost a lot of time when we update the utility table. Therefore, we propose a novel algorithm, TotalSR$^+$, which is more efficient than TotalSR in terms of execution time and scalability. To achieve that, we redesign the \textit{LE-utility table} and \textit{RE-utility table} and introduce an auxiliary antecedent record table (\textit{ART}) to count the sequences that cannot form a legal ToSR.

\begin{definition}[\textit{LE-element}$^+$ of a ToSR in a sequence and \textit{LE-utility}$^+$ table in $\mathcal{D}$]
	\rm Given a ToSR $r$ and a sequence $s$ where $s$ is the sequence that contains $r$, the \textit{LE-element}$^+$ of rule $r$ in sequence $s$ is denoted as \textit{LEE}$^+$($r$, $s$) and defined as \textit{LEE}$^+$($r$, $s$) $=$ $<$\textit{SID}, \textit{Utility}, \textit{LEPEU}, \textit{LEPEU}, \textit{Positions}, \textit{Indices}$>$. \textit{SID}, \textit{Utility} \textit{LEPEU}, \textit{REPEU}, \textit{Positions}, and \textit{Indices} are the same as definition \ref{LE-element}, while \textit{LE-element}$^+$ only records the sequences that contain a legal ToSR. Given a ToSR $r$ and a sequence database $\mathcal{D}$, the \textit{LE-utility}$^+$ \textit{table} of rule $r$ in the database $\mathcal{D}$ is denoted as \textit{LEE}$^+$($r$) and defined as a table that consists of a set of \textit{LE-element}$^+$ of rule $r$.
\end{definition}

\begin{definition}[RE-element$^+$ of a ToSR in a sequence and \textit{RE-utility}$^+$ \textit{table} in $\mathcal{D}$]
	\rm Given a ToSR $r$ and a sequence $s$ where $s$ is the sequence that contains $r$, the \textit{RE-element}$^+$ of rule $r$ in sequence $s$ is denoted as \textit{REE}$^+$($r$, $s$) and defined as \textit{REE}$^+$($r$, $s$) $=$ $<$\textit{SID}, \textit{Uiltiy}, \textit{REPEU}, \textit{Position}, \textit{Index}$>$. \textit{Uiltiy}, \textit{REPEU}, \textit{Position}, \textit{Index} are the same as definition \ref{RE-element}, while \textit{RE-element}$^+$ only records the sequences that contain a legal ToSR. Given a ToSR $r$ and a sequence database $\mathcal{D}$, the \textit{RE-utility}$^+$ \textit{table} of rule $r$ in the database $\mathcal{D}$ is denoted as \textit{REE}$^+$($r$) and defined as a table that consists of a set of \textit{RE-element}$^+$ of rule $r$.
\end{definition}

\begin{definition}[Auxiliary antecedent record table]
	\rm Given a ToSR $r$ $=$ $X$ $\rightarrow$ $Y$ and a sequence database $\mathcal{D}$, the auxiliary antecedent record table of $r$ is defined as a \textit{ART}($r$) $=$ \{\textit{key}: \textit{value}\}, where key is the antecedent of $r$, i.e., $X$, and value is the set of sequences that contain the antecedent of $r$ in $\mathcal{D}$ but cannot form the legal ToSR. Note that the auxiliary antecedent record table only records the sequences that cannot form a legal ToSR.
\end{definition}

\begin{table}[h]
	\centering
	\caption{\textit{LE-utility}$^+$ \textit{table} of $r_1$ in Table \ref{table1}}
	\label{table7}
	
	\begin{tabular}{|c|c|c|c|c|c|}  
		\hline 
		\textbf{SID} & \textbf{Utility} & \textbf{\textit{LEPEU}} & \textbf{\textit{REPEU}} & \textbf{Positions}& \textbf{Indices}\\
		\hline 
		\(s_{3}\) & 17 & 0 & 0 & ($3$, $4$, $4$) & ($5$, $5$)\\
		\hline 
		\(s_{4}\) & 11 & 20 & 0 & ($2$, $4$, $4$) & ($4$, $6$)\\
		\hline 
	\end{tabular}
\end{table}

For example, consider the HTSR $r_1$ $=$ $<$\{$e$, $f$\}, $c$$>$ $\rightarrow$ $<$$b$$>$ and sequence $s_4$, in Table \ref{table3}. Its \textit{LE-utility table} contains three sequences shown in table \ref{table4}. However, in TotalSR$^+$ its \textit{LE-utility}$^+$ \textit{table} only contains two sequences shown in Table \ref{table7}, the sequence $s_2$ that cannot form a legal ToSR is recorded in \textit{ART}($r_1$) $=$ \{\{$e$, $f$\}: \{$s_2$\}\}. 

Compared to TotalSR, with the help of the auxiliary antecedent record table, TotalSR$^+$ can easily calculate the support value of the antecedent of a ToSR $r$. It is equal to the size of \textit{LE-utility}$^+$ \textit{table} plus the size of the value of \textit{ART}($r$). Based on this optimization, TotalSR$^+$ can tremendously reduce the execution time.

\begin{algorithm}[h]
	\caption{TotalSR$^+$ algorithm}
	\label{alg:TotalSR^+}
	\KwIn{$\mathcal{D}$: a sequence database, \textit{minutil}: the utility threshold, \textit{minconf}: the confidence threshold.}
	\KwOut{HTSRs: all high-utility totally-ordered sequential rules.}
	
	initialize $I$ $\leftarrow$ $\emptyset$;
	
	scan $\mathcal{D}$ to compute \textit{SEU}($i$) and $I$ $\cup$ \{$i$\};
	
	\While {\rm$\exists$ $i$ $\in$ $I$ and $\textit{SEU}(i$) $\textless$ \textit{minutil}} {
		delete all unpromising items in $I$ and	update the \textit{SEU}; 
	}
	
	scan $\mathcal{D}$ to calculate \textit{UPSL}, generate $R$ (a set of ToSRs with size $=$ 1 $\ast$ 1), create \textit{LE-utility}$^+$ \textit{tables} and the corresponding \textit{ARTs}, and compute the \textit{SEU} of rule $r$ $\in$ $R$; \\
	delete all unpromising ToSRs in $R$;
	
	\For {$r$ $\in$ $R$}{
		scan \textit{LEE}$^+$($r$) $\in$ \textit{LE-utility}$^+$ \textit{tables} and \textit{ART}($r$) $\in$ \textit{ARTs} to compute $u$($r$) and \textit{conf}($r$); \\
		\If {\rm$u$($r$) $\ge$ \textit{minutil} and \textit{conf}($r$) $\ge$ \textit{minconf}}{
			update HTSRs $\leftarrow$ HTSRs $\cup$ $t$;
		}
		\If {\rm\textit{LEPEU}($r$) $+$ \textit{REPEU}($r$) $-$ $u$($r$) $\ge$ \textit{minutil}}{
			call \textbf{leftExpansion}$^+$($r$, \textit{LEE}$^+$($r$), \textit{ART}($r$));
		}
		\If{\rm\textit{conf}($r$) $\ge$ \textit{minconf} and \textit{REPEU}($r$) $\ge$ \textit{minutil}}{
    			call \textbf{rightExpansion}($r$, \textit{LEE}$^+$($r$), \textit{ART}($t$).$length$ $+$ \textit{LEE}$^+$($t$).$length$);
		}
	}	
\end{algorithm}

\begin{algorithm}[h]
	\caption{The leftExpansion$^+$ procedure}
	\label{alg:leftExpansion^+}
	
	\KwIn{$r$: a ToSR, \textit{LEE}$^+$($r$): \textit{LE-utility}$^+$ \textit{table} of $r$, \textit{ART}($r$): the auxiliary antecedent record table of $r$.}
	
	initialize \textit{LERSPEU} $\leftarrow$ $\emptyset$, \textit{LE-utility}$^+$ \textit{tables} $\leftarrow$ $\emptyset$, \textit{ToSRSet} $\leftarrow$ $\emptyset$;
	
	\For{\rm$s_k$ $\in$ \textit{LEE}$^+$($r$)}{
        \For{\rm$i$ $\in$ $s_k$ and $i$ can be extended into the antecedent of $r$}{
            $t$ $\leftarrow$ $i$ extended into the antecedent of $r$;   //both I- and S-expansion can be implemented\\
            \If{\rm$t$ is an illegal ToSR}{
                update \textit{ART}($t$);\\
            }
            \Else {
                update \textit{LERSPEU}($i$);\\
                \If {\rm\textit{REPEU}($r$) $+$ \textit{LERSPEU}($i$) $\textless$ \textit{minutil}} {
					\If {$t$ $\in$ \textit{ToSRSet}} {
						delete $t$ from \textit{ToSRSet}; \\
						continue;  
					}
				}
				update \textit{ToSRSet} and \textit{LE-utility tables} of $t$;\\
            }
        }
	}
	
	\For {$t$ $\in$ \textit{ToSRSet}} {
	    update \textit{ART}($t$);\\
		scan \textit{LEE}$^+$($t$) $\in$ \textit{LE-utility}$^+$ \textit{tables} and \textit{ART}($t$) to calculate $u$($t$) and \textit{conf}($t$); \\
		\If {\rm$u$($t$) $\ge$ \textit{minutil} and \textit{conf}($t$) $\ge$ \textit{minconf}} {
			update HTSRs $\leftarrow$ HTSRs $\cup$ $t$;
		}
		\If {\textit{LEPEU}($t$) $+$ \textit{REPEU}($t$) $-$ $u$($t$) $\ge$ \textit{minutil}}{
			call \textbf{leftExpansion}$^+$($t$, \textit{LEE}$^+$($t$), \textit{ART}($t$));
		}
		\If{\rm\textit{conf}($t$) $\ge$ \textit{minconf} and \textit{REPEU}($t$) $\ge$ \textit{minutil}}{
    			call \textbf{rightExpansion}($t$, \textit{LEE}$^+$($t$), \textit{ART}($t$).$length$ $+$ \textit{LEE}$^+$($t$).$length$);
		}
	}
\end{algorithm}

In TotalSR$^+$, the Algorithms \ref{alg:TotalSR^+} and \ref{alg:leftExpansion^+} are very similar to Algorithms \ref{alg:TotalSR} and \ref{alg:leftExpansion}, respectively. Therefore, we will not describe the details again in Algorithms \ref{alg:TotalSR^+} and \ref{alg:leftExpansion^+}. On the contrary, we just mention the different parts. As for the right expansion in TotalSR$^+$, we use the same procedure of rightExpansion in TotalSR. Because when TotalSR$^+$ implements a right expansion, the only difference is that TotalSR$^+$ will use \textit{REE}$^+$($t$) rather than \textit{REE}($t$).

Compared to Algorithm \ref{alg:TotalSR}, in Algorithm \ref{alg:TotalSR^+} TotalSR$^+$ needs to create the auxiliary antecedent record tables of each 1 $\ast$ 1 rule for the convenience of computation of antecedent's support value (Line 6). To compute the confidence value of the ToSR $r$, TotalSR$^+$ needs to combine the \textit{LEE}$^+$($r$) and \textit{ART}($r$) (Line 9). Note that the utility value can be computed using only \textit{LEE}$^+$($r$). When the leftExpansion$^+$ is called, TotalSR$^+$ will pass \textit{ART}($r$) to guarantee the correctness of the antecedent's support value of the next extended ToSR (Line 14). When TotalSR$^+$ calls rightExpansion, the support of the antecedent is fixed. Thus, TotalSR$^+$ just needs to pass the length of the value in the auxiliary antecedent record table of $r$ to ensure the correctness of the antecedent's support value calculation of the next extended ToSR (Line 17).

In Algorithm \ref{alg:leftExpansion^+}, TotalSR$^+$ takes an extra auxiliary antecedent record table of $r$ to ensure the correctness of the support of antecedent as its input compared to Algorithm \ref{alg:leftExpansion}. When TotalSR$^+$ forms an illegal ToSR, TotalSR$^+$ will update \textit{ART}($t$) instead of \textit{LE-utility}$^+$ \textit{tables} of $t$, since in TotalSR$^+$ \textit{LE-utility}$^+$ \textit{tables} only records the legal ToSR (Line 6). Besides, for those antecedents that cannot form legal ToSR in \textit{ART}($r$), TotalSR$^+$ also needs to update them to ensure the correctness of the support of antecedents (Lines 21). Then TotalSR$^+$ scans \textit{LEE}$^+$($t$) and \textit{ART}($t$) to calculate $u$($t$) and \textit{conf}($t$) (Line 22). Similar to Algorithm \ref{alg:TotalSR^+}, when TotalSR$^+$ performs leftExpansion$^+$ it will also pass \textit{ART}($r$) to guarantee the correctness of the antecedent's support value of the next extended ToSR (Line 27). When TotalSR$^+$ implements rightExpansion, it will only pass the length of the value in the auxiliary antecedent record table of $t$ (Line 30).

\section{Experiments}  \label{sec:exps}

In this section, we designed experiments to verify the performance of the two proposed algorithms, TotalSR and TotalSR$^+$. Since there is no existing work about mining totally-ordered sequential rules, we used TotalSR to conduct ablation studies to demonstrate the effect of the pruning strategies and to prove the effectiveness of each pruning strategy simultaneously. To do this, we designed six algorithms for TotalSR with different optimization and pruning strategies. These algorithms are denoted as TotalSR$\rm_{Bald}$, TotalSR$\rm_{SEU}$, TotalSR$\rm_{SEU^-}$, TotalSR$\rm_{RSU}$, TotalSR$\rm_{RSPEU}$, and TotalSR, respectively. TotalSR$\rm_{Bald}$ does not use any pruning strategy except the \textit{UPSL} optimization. TotalSR$\rm_{SEU}$ uses \textit{SEU} and \textit{UPSL}, while TotalSR$\rm_{SEU^-}$ only uses \textit{SEU} to verify the efficiency of \textit{UPSL}. Furthermore, \textit{LEPEU}, \textit{REPEU}, \textit{LERSU}, and \textit{RERSU} are used in TotalSR$\rm_{RSU}$. To validate the validity of the two novel pruning strategies proposed in this paper, TotalSR$\rm_{RSPEU}$ uses pruning strategies \textit{LEPEU}, \textit{REPEU}, \textit{LERSPEU}, and \textit{RERSPEU}. Naturally, the sixth variant, TotalSR, uses all the pruning strategies, including the confidence pruning strategy. In addition, TotalSR$^+$ adopts all pruning strategies and utilizes \textit{ART} to further improve the performance. Except for TotalSR$\rm_{SEU^-}$, which is designed to evaluate the effectiveness of \textit{UPSL}, all the other algorithms utilize \textit{UPSL} to perform optimization.

All algorithms are implemented in Java with JDK 11.0.15, and the machine used for all experiments has a 3.8 GHz Intel Core i7-10700K processor, 32 GB of RAM, and a 64-bit version of Windows 10. All experimental results are listed below. Note that if the corresponding algorithm takes more than 10,000 seconds to execute, the mining process will be stopped.

\subsection{Data description}

The performance of the two proposed algorithms is assessed using six datasets, comprising four real-life datasets and two synthetic datasets. The four real-life datasets, including Leviathan, Bible, Sign, and Kosarak10k are generated from a book, a book, sign language, and click-stream, respectively. All real-life datasets can be downloaded from the open-source website SPMF \cite{fournier2016spmf}. SynDataset-10k and SynDataset-20k are two synthetic datasets that are generated from the IBM data generator \cite{agrawal1995mining}. Moreover, we use the same simulation model that was universally used in \cite{tseng2012efficient, liu2012mining, lin2017fdhup, gan2020proum} to generate the internal utility and the external utility for each dataset. The descriptions of these datasets are listed in Table \ref{datasets}. $\vert \mathcal{D} \vert$ represents the number of sequences in each dataset. The distinct quantity of items in each dataset is denoted as $\vert \textit{I} \vert$. The average number and the maximum number of itemsets in each dataset are denoted as \textit{avg(S)} and \textit{max(S)}, respectively. The average length of the sequence in each dataset is expressed as \textit{avg(Seq)}. And \textit{avg(Ele)} means the average items in each itemset of the sequence in every dataset.

\begin{table}[h]
	\caption{Description of the six datasets}  
	\label{datasets}
	\centering
	\begin{tabular}{|c|c|c|c|c|c|c|}
    	\hline
    	\textbf{Dataset} & \textbf{$\vert \mathcal{D} \vert$} & \textbf{$\vert \textit{I} \vert$} & \textbf{$\textit{avg}(\textit{S})$} & \textbf{$\textit{max}(\textit{S})$} & \textbf{$\textit{avg}(\textit{Seq})$} & \textbf{\textit{avg}(\textit{Ele})}   \\
    	\hline 
            Bible & 36,369 & 13,905 & 21.64 & 100 & 21.64 & 1.00  \\ \hline
            Kosarak10k & 10,000 & 10,094 & 8.14 & 608 & 8.14 & 1.00  \\ \hline
            Leviathan & 5,834 & 9,025 & 33.81 & 100 & 33.81 & 1.00  \\ \hline
            SIGN & 730 & 267 & 52.00 & 94 & 52.00 & 1.00  \\ \hline
    		SynDataset10k  & 10,000  & 7312 & 6.22 & 18 & 26.99 & 4.35 \\ \hline
            SynDataset20k  & 20,000  & 7442 & 6.20 & 18 & 26.97 & 4.35 \\ 
        \hline 
	\end{tabular}
\end{table}

\subsection{Efficiency analysis}

In this subsection, we will analyze the efficiency of the proposed two algorithms, TotalSR and TotalSR$^+$, in terms of run time. To keep track of the results that we want to compare with the different version algorithms in the last subsection we designed, we set the minimum confidence threshold to 0.6, but the minimum utility threshold will be varied according to the characteristics of the different datasets. All experimental results are illustrated in Fig. \ref{runtime}.

\begin{figure*}[h]
	\centering
	\includegraphics[width=1\linewidth]{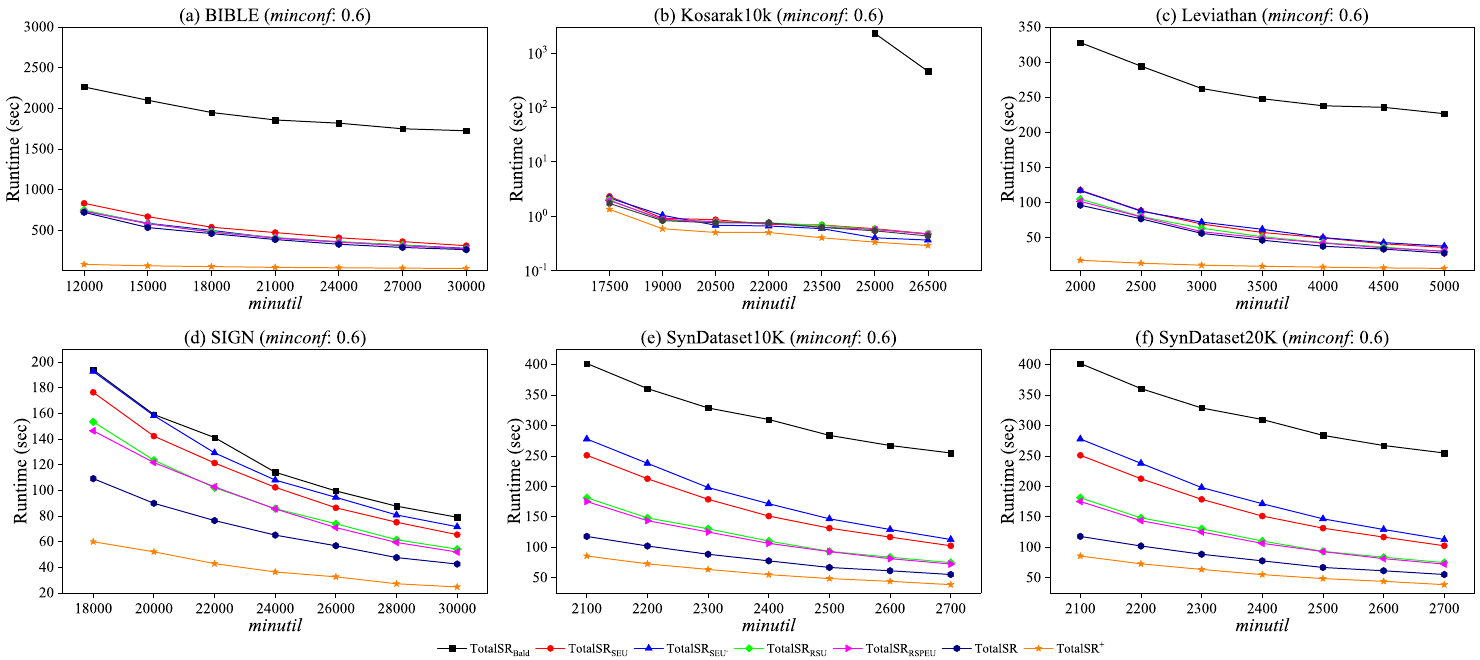}
	\caption{Execution time results under various minimum utility thresholds}
	\label{runtime}
\end{figure*}

From Fig. \ref{runtime}, we can find that on all datasets and under any minimum utility threshold, TotalSR$^+$ can achieve the best performance compared to all the other algorithms, demonstrating that TotalSR$^+$ is the best method in terms of execution time. Also, the results of TotalSR$\rm_{Bald}$ show that the algorithm without any pruning strategy will take an extremely and unacceptable long time to get the results that other methods can easily get. Total$\rm_{Bald}$ cannot produce results in 1000 seconds on Kosarak10k, whereas the other algorithms can produce results in a few seconds. Note that the curve of Total$\rm_{Bald}$ when \textit{minutil} less than or equal to 23500 is not drawn in Fig. \ref{runtime}(b) since it can not finish the mining process within 10,000 seconds. On datasets that consist of longer sequences, the data structure of \textit{UPSL} can realize great efficiency promotion, such as on the Leviathan, SIGN, SynDataset10k, and SynDataset20k. However, the performance of TotalSR$\rm_{SEU}$ on BIBLE and Kosarak10k show that \textit{UPSL}-based variant is relatively weaker than TotalSR$\rm_{SEU^-}$, especially on \textit{BIBLE}. This is because \textit{UPSL} must be initialized first in TotalSR$\rm_{SEU}$, which takes $O(nm)$ more time, where $n$ is the number of sequences in the dataset and $m$ is the average length of the sequences. While in TotalSR$\rm_{SEU^-}$, we do not waste time initializing \textit{UPSL}. From the curves of the execution time of TotalSR$\rm_{RSU}$ and TotalSR$\rm_{RSPEU}$ shown in Fig. \ref{runtime}, we can figure out that TotalSR$\rm_{RSU}$, which uses the traditional upper bounds, is obviously slower than TotalSR$\rm_{RSPEU}$ on all datasets and under any minimum utility threshold, which proves that the two novel upper bounds are effective and can contribute to better performance. Moreover, the results of TotalSR on each dataset show that the confidence pruning strategy can realize extraordinary effects. In summary, the abundant results demonstrate that TotalSR using the two novel pruning strategies can discover all HTSRs in an acceptable time. Furthermore, TotalSR$^+$, which uses the auxiliary optimization, can realize better performance and be suitable for all datasets and any conditions.

\subsection{Effectiveness of the pruning strategies}

In this subsection, to validate the effectiveness of the two novel pruning strategies, we compare five algorithms, including TotalSR$\rm_{Bald}$, TotalSR$\rm_{SEU}$, TotalSR$\rm_{RSU}$, TotalSR$\rm_{RSPEU}$, and TotalSR. They can demonstrate that \textit{LERSPEU}, \textit{RERSPEU}, and the confidence pruning strategy are more effective by comparing the generated number of candidates and the number of HTSRs with different minimum utility thresholds and different datasets. The results are shown in Fig. \ref{effectiveness}. Note that the left Y axis is the number of candidate HTSRs using the histogram to display and the right Y axis is the number of real HTSRs using a red curve to represent them.

As Fig. \ref{effectiveness} shown with the \textit{minutil} increases, the number of corresponding candidates decreases, while the extent of the decrease of different methods is distinct. TotalSR$\rm_{Bald}$ has the highest number of candidates across all datasets since it does not use any pruning strategy. With the help of the \textit{SEU} pruning strategy, the number of candidates for TotalSR$\rm_{SEU}$ decreases sharply on four real-life datasets. The effect is not very obvious but still works on two synthetic datasets. Compared to the \textit{SEU} pruning strategy, the pruning strategies of \textit{LERSU} and \textit{RERSU} further sharply reduce the number of candidates on all datasets, even the two synthetic datasets. Thus, they are able to reduce the number of candidates. Moreover, the number of candidates generated by the two novel pruning strategies of \textit{LERSPEU} and \textit{RERSPEU} on all datasets is smaller than the traditional pruning strategies based on \textit{LERSU} and \textit{RERSU}. Even if the reduction in the number of candidates is not particularly noticeable, it still has an effect. As shown in Fig. \ref{runtime}, TotalSR$\rm_{RSPEU}$ takes less time than TotalSR$\rm_{RSU}$ to get the same results. The other important point is that the quantity of candidates produced by the confidence pruning strategy drops further on each dataset. The results shown in Fig. \ref{effectiveness} demonstrate that the \textit{LERSPEU}, \textit{RERSPEU}, and the confidence pruning strategy are extraordinarily effective.

\begin{figure*}[h]
	\centering
	\includegraphics[width=1\linewidth]{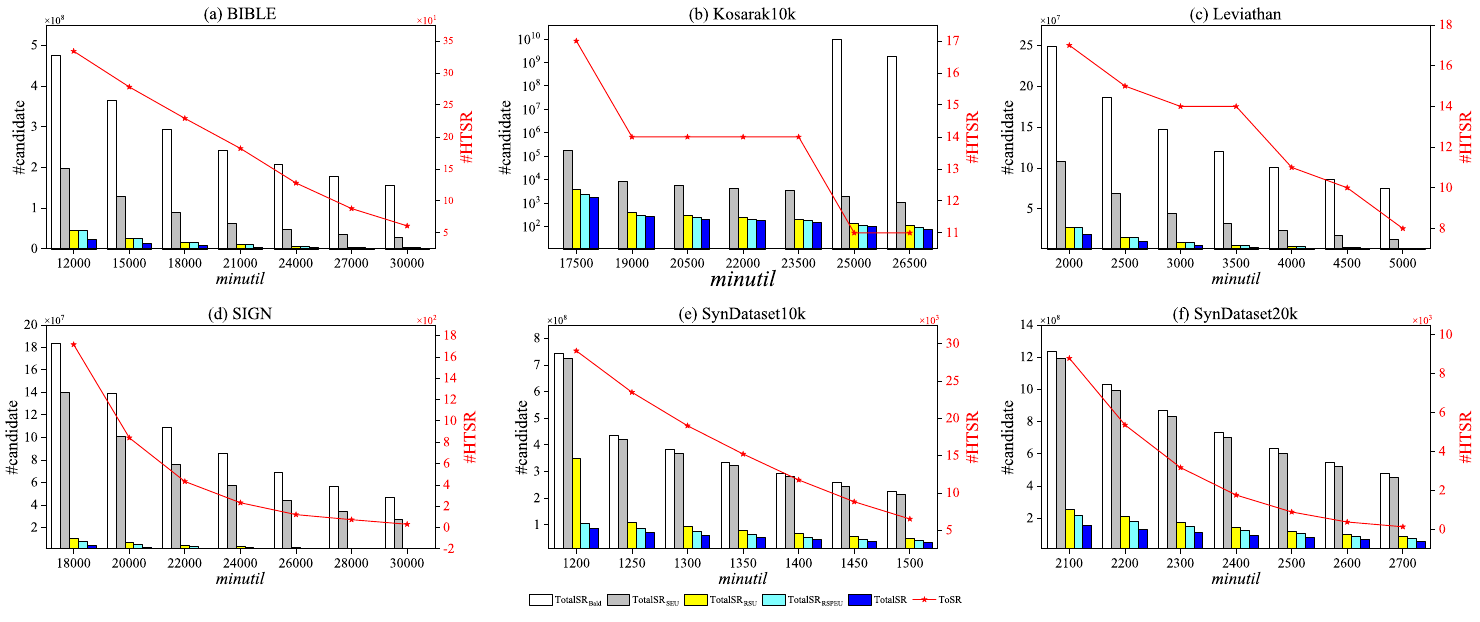}
	\caption{The generated candidates and HTSRs under different pruning strategies and various minimum utility thresholds}
	\label{effectiveness}
\end{figure*}

\begin{figure*}[h]
	\centering
	\includegraphics[width=1\linewidth]{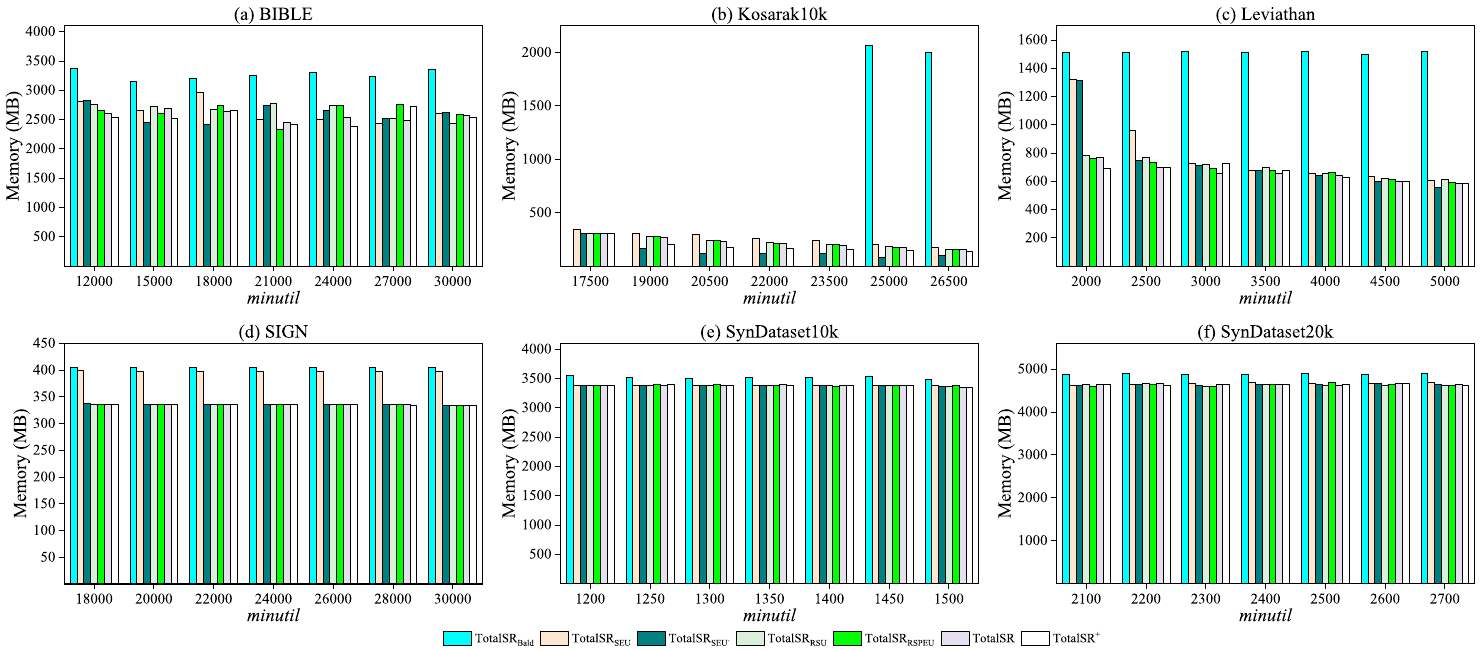}
	\caption{The memory usage of the proposed methods under various minimum utility thresholds}
	\label{memory}
\end{figure*}

\begin{figure*}[h]
	\centering
	\includegraphics[width=1\linewidth]{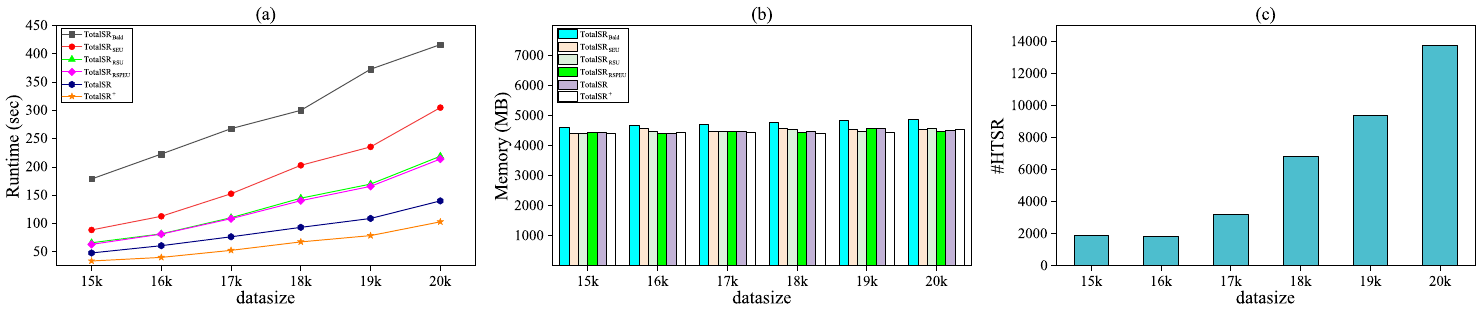}
	\caption{The scalability test of the proposed algorithms}
	\label{scalability}
\end{figure*}

\subsection{Memory evaluation}

In addition to analyzing the running time, memory consumption is also a crucial measure of the effectiveness of the algorithm. In this subsection, we access the memory consumption of the different versions of the algorithms. The details are presented in Fig. \ref{memory}.

On the BIBLE dataset, TotalSR$\rm_{Bald}$ costs the most memory, while the memory consumption of other algorithms is different. Under the small \textit{minutil}, TotalSR$\rm_{RSU}$ and TotalSR$\rm_{RSPEU}$ cost less memory relatively, but in large \textit{minutil} TotalSR$\rm_{SEU}$ and TotalSR$\rm_{SEU^-}$ consumed less memory. On Kosarak10k dataset, the memory cost of TotalSR$\rm_{Bald}$ was severely high. On the contrary, the other algorithms took up much less memory. TotalSR$\rm_{SEU^-}$ used less memory than the remaining algorithms, because the other algorithms needed to keep the \textit{UPSL} in memory. On Leviathan, TotalSR$\rm_{Bald}$ cost most memory as usual, and TotalSR$\rm_{SEU}$ consumed a little less than TotalSR$\rm_{Bald}$, while other algorithms used relatively less memory, which is similar to BIBLE. On the dataset SIGN, TotalSR$\rm_{Bald}$ and TotalSR$\rm_{ SEU}$ consumed high memory, while TotalSR$\rm_{SEU^-}$, TotalSR$\rm_{RSU}$, TotalSR$\rm_{RSPEU}$, TotalSR, and TotalSR$^+$ used small amount of memory and almost to be the same. On SynDataset10k and SynDataset20k, TotalSR$\rm_{Bald}$ still costs the most memory, but the other methods occupied almost the same memory. With the help of the pruning strategies, the memory consumption of the algorithm will be greatly reduced compared to the method that do not use all the pruning strategies since they can remove the unpromising items to save a lot of memory. Meanwhile, although the \textit{UPSL} will occupy some memory, it is just a small part, and the advantage of \textit{UPSL} is much better.

\subsection{Scalability test}

To verify the robustness of the proposed algorithms, we conducted several experiments on the dataset \cite{agrawal1995mining} with sizes ranging from 15k to 20k, and the minimum utility threshold was fixed at 2000. Moreover, TotalSR$\rm_{SEU^-}$ will not be tested since TotalSR$\rm_{SEU^-}$ is the same as TotalSR$\rm_{SEU}$. All results are presented in Fig. \ref{scalability}. As the dataset size increases, the execution time of each algorithm will increase simultaneously. TotalSR$^+$ still takes the shortest time, followed by TotalSR, TotalSR$\rm_{RSPEU}$, TotalSR$\rm_{RSU}$, TotalSR$\rm_{SEU}$, and TotalSR$\rm_{Bald}$. This is similar to the results in Fig. \ref{runtime}. When it comes to memory consumption, Fig. \ref{scalability}(b) shows that TotalSR$\rm_{Bald}$ costs the most memory, and the memory used by other algorithms is similar to Fig. \ref{memory}(e) and Fig. \ref{memory}(f). As the results shown in Fig. \ref{scalability}(c), the number of HTSRs will gradually increase with the size of dataset. And it is in line with our expectation. The scalability test results show that the proposed algorithms have excellent scalability, and TotalSR$^+$ performs unquestionably best under all conditions.
\section{Conclusion} \label{sec:concs}

In this work, we formulated the problem of totally-ordered sequential rule mining and proposed the basic algorithm TotalSR to discover the complete HTSRs in a given dataset. To verify the effectiveness of the pruning strategies proposed in this paper, we designed several ablation experiments. The results showed that the pruning strategies can make a great contribution to the reduction of the search space, which can improve the efficiency of the algorithm. Besides, to further improve the efficiency, we designed the other algorithm TotalSR$^+$ using an auxiliary antecedent record table (\textit{ART}) to maintain the antecedent of the sequences that cannot generate a legal ToSR. With the help of \textit{ART}, TotalSR$^+$ can significantly reduce execution time. Finally, extensive experiments conducted on different datasets showed that the proposed two algorithms can not only effectively and efficiently discover all HTSRs but also have outstanding scalability.

In the future, we are looking forward to developing several algorithms based on TotalSR$^+$ that can discover target rules \cite{gan2022towards} or process more complicated data like interval-based events \cite{lee2009mining}. All these fields would be interesting to be exploited.

\section*{Acknowledgment}

This research was supported in part by the National Natural Science Foundation of China (Grant Nos. 62002136 and 62272196), Natural Science Foundation of Guangdong Province of China (Grant No. 2020A1515010970), Shenzhen Research Council (Grant No. GJHZ20180928155209705), and Guangzhou Basic and Applied Basic Research Foundation (Grant No. 202102020277).

\bibliographystyle{ACM-Reference-Format}
\bibliography{TotalSR.bib}
\end{document}